\newtheorem{thm}{Theorem}
\newtheorem{lem}{Lemma}
\newcommand{\argmax}{\operatornamewithlimits{argmax}}
\newcommand{\argmin}{\operatornamewithlimits{argmin}}
\newtheorem{assum}{Assumption}
\newtheorem{proposition}{Proposition}
\newtheorem{remark}{Remark}
\begin{document}

%
\runningtitle{Online Stochastic Gradient Descent and Thompson Sampling}

%
\runningauthor{Ding, Hsieh, Sharpnack }

\twocolumn[

\aistatstitle{An Efficient Algorithm For Generalized Linear Bandit: Online Stochastic Gradient Descent and Thompson Sampling}

\aistatsauthor{ Qin Ding \And Cho-Jui Hsieh }

\aistatsaddress{  Department of Statistics \\ University of California, Davis \\ qding@ucdavis.edu 
\And  Department of Computer Science \\ University of California, Los Angeles \\ chohsieh@cs.ucla.edu } 

\aistatsauthor{  James Sharpnack }

\aistatsaddress{  Department of Statistics \\ University of California, Davis \\ jsharpna@ucdavis.edu } ]


\begin{abstract}
We consider the contextual bandit problem, where a player sequentially makes decisions based on past observations to maximize the cumulative reward. Although many algorithms have been proposed for contextual bandit, most of them rely on finding the maximum likelihood estimator at each iteration, which requires $O(t)$ time at the $t$-th iteration and are memory inefficient. A natural way to resolve this problem is to apply online stochastic gradient descent (SGD) so that the per-step time and memory complexity can be reduced to constant with respect to $t$, but a contextual bandit policy based on online SGD updates that balances exploration and exploitation has remained elusive. In this work, we show that online SGD can be applied to the generalized linear bandit problem. The proposed SGD-TS algorithm, which uses a single-step SGD update to exploit past information and uses Thompson Sampling for exploration, achieves $\tilde{O}(\sqrt{T})$ regret with the total time complexity that scales linearly in $T$ and $d$, where $T$ is the total number of rounds and $d$ is the number of features. Experimental results show that SGD-TS consistently outperforms existing algorithms on both synthetic and real datasets. 
\end{abstract}

\section{INTRODUCTION}\label{intro}
A contextual bandit is a sequential learning problem, where each round the player has to decide which action to take by pulling an arm from $K$ arms. Before making the decisions at each round, the player is given the information of $K$ arms, represented by $d$-dimensional feature vectors. Only the rewards of pulled arms are revealed to the player and the player may use past observations to estimate the relationship between feature vectors and rewards. However, the reward estimate is biased towards the pulled arms as the player cannot observe the rewards of unselected arms. The goal of the player is to maximize the cumulative reward or minimize cumulative regret across $T$ rounds. Due to this partial feedback setting in bandit problems, the player is facing
a dilemma of whether to exploit by pulling the best arm based on the current estimates, or to explore uncertain arms to improve the reward estimates. This is the so-called exploration-exploitation trade-off. Contextual bandit problem has substantial applications in recommender system \citep{li2010contextual}, clinical trials \citep{woodroofe1979one}, online advertising \citep{schwartz2017customer}, etc. It is also the fundamental problem of reinforcement learning \citep{sutton1998introduction}.

The most classic problem in contextual bandit is the stochastic linear bandit \citep{abbasi2011improved,chu2011contextual}, where the expected rewards follow a linear model of the feature vectors and an unknown model parameter $\theta^* \in \mathbbm{R}^d$. 
Upper Confidence Bound (UCB) \citep{abbasi2011improved,auer2002finite,chu2011contextual} and Thompson Sampling (TS) \citep{thompson1933likelihood,agrawal2012analysis,agrawal2013thompson,chapelle2011empirical} are two most popular algorithms to solve bandit problems. UCB uses the upper confidence bound to estimate the reward optimistically and therefore mixes exploration into exploitation. TS assumes the model parameter follows a prior and uses a random sample from the posterior to estimate the reward model. 
Despite the popularity of stochastic linear bandit, linear model is restrictive in representation power and the assumption of linearity rarely holds in practice. This leads to extensive studies in more complex contextual bandit problems such as generalized linear bandit (GLB) \citep{filippi2010parametric,jun2017scalable,li2017provably}, where the rewards follow a generalized linear model (GLM). 
\cite{li2012unbiased} shows by extensive experiments that GLB achieves lower regret than linear bandit in practice.

For most applications of contextual bandit, efficiency is crucial as the decisions need to be made in real time. 
While GLB can still be solved by UCB or TS, the estimate of upper confidence bound or posterior becomes much more challenging than the linear case. It does not have closed form in general and has to be approximated, which usually requires costly operations in online learning.
As pointed out by \cite{li2017provably}, most GLB algorithms suffer from two expensive operations. The first is that they need to invert a $d\times d$ matrix every round, which is time-consuming when $d$ is large. The second is that they need to find the maximum likelihood estimator (MLE) by solving an optimization problem using all the previous observations at each round. This results in $\Omega(T^2)$ time and $O(T)$ memory for $T$ rounds.

From an optimization perspective, stochastic gradient descent (SGD) \citep{hazan2016introduction} is a popular algorithm for both convex and non-convex problems, even for complex models like neural networks. Online SGD \citep{hazan2016introduction} is an efficient optimization algorithm that incrementally updates the estimator via new observations at each round. 
Although it is natural to apply online SGD to contextual bandit problems so that the time complexity at the $t$-th round can be reduced to constant with respect to $t$, it has not been successfully used due to the following reasons: 1) the hardness of constructing unbiased stochastic gradient with controllable variance due to the partial feedback setting in bandit problems, 2) the difficulty to achieve a balance between sufficient exploration and fast convergence to the optimal decision using solely online SGD, 3) lack of theoretical guarantee. Previous attempts of online SGD in contextual bandit problems are limited to empirical studies. \cite{bietti2018contextual} uses importance weight and doubly-robust techniques to construct unbiased stochastic gradient 
with reduced variance. In \cite{riquelme2018deep}, it is shown that the inherit randomness of SGD does not always offer enough exploration for bandit problems. To the best of our knowledge, there is no existing work that can successfully apply online SGD to update the model parameter of a contextual bandit, while maintaining low theoretical regret.

In this work, we study how online SGD can be appropriately applied to GLB problems. To overcome the dilemma of exploration and exploitation, we propose an algorithm that carefully combines online SGD and TS techniques for GLB. The
exploration factor in TS is re-calibrated to make up for the gap between SGD estimator and MLE. Interestingly, we found that by doing so, we can skip the step of inverting matrices. This leads to $O(Td)$ time complexity of our proposed algorithm when $T$ is much bigger than $d$, which is the most efficient GLB algorithm so far. We provide theoretical guarantee of our algorithm and show that under the ``diversity'' assumption (formally defined in Assumption \ref{lambda_f} of Section \ref{problem}), it can obtain $\tilde O(\sqrt{T})$\footnote{$\tilde O$ ignores poly-logarithmic factors.} regret upper bound for finite-arm GLB problems.
Recently, similar ``diversity'' assumptions have been made to analyze the regret bounds of linear UCB (LinUCB) \citep{wu2020stochastic}, greedy algorithms \citep{bastani2020mostly,kannan2018smoothed} or perturbed adversarial bandit setting \citep{kannan2018smoothed}, though none of them improve the efficiency of contextual bandit algorithms, which is one of the most important contributions of our work. We will discuss in Remark \ref{diverse_ass} the comparisons of previous ``diversity'' assumptions and ours.

\textbf{Notations:} We use $\theta^*$ to denote the true model parameter. For a vector $x \in \mathbbm{R}^d$, we use $\|x\|$ to denote its $l_2$ norm and $\|x\|_A = \sqrt{x^T A x}$ to denote its weighted $l_2$ norm associate with a positive-definite matrix $A \in \mathbbm{R}^{d\times d}$. We use $\lambda_{\min}(A)$ to denote the minimum eigenvalue of a matrix $A$. 
Denote $[n] := \{1,2,\dots, n\}$ and $f^\prime$ as the first derivative of a function $f$. Finally, we use $\lfloor b \rfloor$ to denote the maximum integer such that $\lfloor b\rfloor \leq b$ and use $\lceil b \rceil$ to denote the minimum integer such that $\lceil b\rceil \geq b$.

\section{RELATED WORK}\label{relatedworks}
In this section, we briefly discuss some previous algorithms in GLB. \cite{filippi2010parametric} first proposes a UCB type algorithm, called GLM-UCB. It achieves $\tilde O(\sqrt{T})$ regret upper bound. According to \cite{dani2008stochastic}, this regret bound is optimal up to logarithmic factors for contextual bandit problems. \cite{li2017provably} proposes a similar algorithm called UCB-GLM. It improves the regret bound of GLM-UCB by a $\sqrt{\log T}$ factor.
The main idea is to calculate the MLE of $\theta^*$ at each round, and then find the upper confidence bound of reward estimates.
The time complexity of these two algorithms depends quadratically on both $d$ and $T$ as they need to calculate the MLE and matrix inverse every round. SupCB-GLM \citep{li2017provably} has similar regret bounds for finite-arm GLB problem. Its theoretical time complexity is similar to UCB-GLM, although it is impractical generally.

Another rich line of algorithms for GLB follows TS scheme, where the key is to estimate the posterior of $\theta^*$ after observing extra data at each round. 
Laplace-TS \citep{chapelle2011empirical} estimates the posterior of regularized logistic regression by Laplace approximations, whose per-round time complexity is $O(d)$. However, Laplace-TS works only for logistic bandit and does not apply to general GLB problems. Moreover, it performs poorly when the feature vectors are non-Gaussian and when $d>K$.
\cite{dumitrascu2018pg} proposes P\'{o}lya-Gamma augmented Thompson Sampling (PG-TS) with a Gibbs sampler to estimate the posterior for logistic bandit. 
However, Gibbs sampler inference is very expensive in online algorithms. The time complexity of PG-TS is $O(M(d^2T^2 + d^3T))$, where $M$ is the burn-in step. In general, previous TS based algorithms for logistic bandit have regret bound $\tilde O(\sqrt{T})$ \citep{dong2019on,abeille2017linear,russo2014learning}.

More recently, \cite{kveton2020randomized} proposed two algorithms for GLB, both enjoy $\tilde O(\sqrt{T})$ total regret. GLM-TSL \citep{kveton2020randomized} follows the TS technique. It draws a sample from the approximated posterior distribution and pulls the arm with the best estimates of this posterior. As it needs to calculate the MLE and the covariance matrix of the posterior needs to be reweighted using previous pulls every round, its time complexity depends quadratically on both $d$ and $T$. 
GLM-FPL \citep{kveton2020randomized} fits a generalized linear model to the past rewards randomly perturbed by the Gaussian noises and pulls the arm that has the best reward based on this model. Its time complexity is also quadratic on $T$. 

In addition to UCB and TS algorithm, $\epsilon$-greedy algorithm \citep{auer2002finite,sutton1998introduction} is also very popular in practice due to its simplicity, although it does not have theoretical guarantee in general bandit framework. At each round, $\epsilon$-greedy has probability $\epsilon$ to randomly pull an arm, and has probability $1-\epsilon$ to pull the best arm from the current estimates. The time complexity of $\epsilon$-greedy algorithm depends quadratically on $T$ as it need to calculate the MLE every round to find the current best estimates.

To make GLB algorithms scalable, \cite{jun2017scalable} proposes Generalized
Linear Online-to-confidence-set Conversion (GLOC) algorithm. GLOC utilizes the exp-concavity of the loss function of GLM and applies online Newton steps to construct a confidence set for $\theta^*$. GLOC and its TS version, GLOC-TS both achieve $\tilde O(\sqrt{T})$ regret upper bound. The total time complexity of GLOC is $O(Td^2)$ due to the successful use of an online second order update. However, GLOC remains expensive when $d$ is large. We show a detailed analysis of time complexity of GLB algorithms in Table \ref{time_compare} of Section \ref{exps}.

\section{PROBLEM SETTING}\label{problem}
We consider the $K$-armed stochastic generalized linear bandit (GLB) setting. Denote $T$ as the total number of rounds. At each round $t \in [T]$, the player observes a set of contexts including $K$ feature vectors $\mathcal{A}_t := \{x_{t,a} | a\in [K]\} \subset \mathbbm{R}^d$. $\mathcal{A}_t$ is drawn IID from an unknown distribution with $\|x_{t,a}\| \leq 1$ for all $t \in [T]$ and $a\in [K]$, where $x_{t,a}$ represents the information of arm $a$ at round $t$. We make the same regularity assumption as in \cite{li2017provably}, i.e., there exists a constant $\sigma_0 > 0$ such that $\lambda_{\min} \left( \mathbbm{E}\left[ \frac{1}{K} \sum_{a=1}^K x_{t,a} x_{t,a}^T \right] \right) \geq \sigma_0^2$.
Denote $y_{t,a}$ as the associated random reward of arm $a$ at round $t$.
After $\mathcal{A}_t$ is revealed to the player, the player pulls an arm $a_t \in [K]$ and only observes the reward associated with the pulled arm, $y_{t,a_t}$. In the following, we denote $Y_t = y_{t,a_t}$ and $X_t = x_{t,a_t}$.

In GLB, the expected rewards follow a generalized linear model (GLM) of the feature vectors and an unknown vector $\theta^* \in \mathbbm{R}^d$, i.e., there is a fixed, strictly increasing link function $\mu : \mathbbm{R}\to \mathbbm{R}$ such that $\mathbbm{E}[y_{t,a}|x_{t,a}] = \mu(x_{t,a}^T \theta^*)$ for all $t$ and $a$. For example, linear bandit and logistic bandit are special cases of GLB with $\mu(x) = x$ and $\mu(x) = 1/(1+e^{-x})$ respectively. Without loss of generality, we assume $\mu(x) \in [0,1]$ and $y_{t,a} \in [0,1]$.\footnote{Rewards in $[0,1]$ is a non-critical assumption, which can be easily removed. In fact, we only need the rewards to have bounded variance for all the analysis to work.} 
We also assume that $Y_t$ follows a sub-Gaussian distribution with parameter $R>0$. Formally, the GLM can be written as 
$Y_t = \mu(X_t^T \theta^*) + \epsilon_{t},$
where $\epsilon_t$ are independent zero-mean sub-Gaussian noises with parameter $R$. We use $\mathcal{F}_t = \sigma (a_1,\dots,a_t, \mathcal{A}_1, \dots, \mathcal{A}_t, Y_1,\dots, Y_t)$ to denote the $\sigma$-algebra generated by all the information up to round $t$. Then we have $\mathbbm E\left[ e^{\lambda\epsilon_t} | \mathcal{F}_{t-1} \right] \leq e^{\frac{\lambda^2 R^2}{2}}$ for all $t$ and $\lambda \in \mathbbm R$.
Denote $a_t^* = \argmax_{a\in [K]} \mu(x_{t,a}^T \theta^*)$ and $x_{t,*} = x_{t,a_t^*}$, the cumulative regret of $T$ rounds is defined as
\begin{equation}
    R(T) = \sum_{t=1}^T \left[\mu(x_{t,*}^T \theta^*) - \mu(X_t^T \theta^*) \right].
\end{equation}
The player's goal is to find an optimal policy $\pi$, such that if the player follows policy $\pi$ to pull arm $a_t$ at round $t$, the total regret $R(T)$ or the expected regret $\mathbbm{E}[R(T)]$ is minimized. Note that $R(T)$ is random due to the randomness in $a_t$.
We make the following mild assumptions similar to \cite{li2017provably}.
\begin{assum}\label{liptz}
$\mu$ is differentiable and there exists a constant $L_\mu > 0$ such that $|\mu^{\prime}| \leq L_{\mu}$. 
\end{assum}
For logistic link function, Assumption \ref{liptz} holds when $L_{\mu} = \frac{1}{4}$. For linear function, we have $L_{\mu} = 1$.
\begin{assum}\label{mu_prime}
We assume $c_3 > 0$, where $c_{\eta}:= \inf_{\{\|x\|\leq 1, \|\theta - \theta^* \| \leq \eta \}} \mu^{\prime} (x^T \theta)$.
\end{assum}
This assumption is not stronger than the assumption made in \cite{li2017provably} for linear bandit and logistic bandit, as \cite{li2017provably} assumes $c_1 > 0$ and $\frac{c_3}{c_1} \sim O(1)$ in both cases. 

To make sure we can successfully apply online SGD update in bandit problems, we also need the following regularity assumption, which assumes that the optimal arm based on any model parameter $\theta$ has non-singular second moment matrix. This assumption is similar to the regularity assumption made in \cite{li2017provably}, which assumes that the averaged second moment matrices of feature vectors, i.e., $\mathbbm{E}[\frac{1}{K}\sum_{a=1}^K x_{t,a} x_{t,a}^T]$ is non-singular. Assumption \ref{lambda_f} below merely says that the same holds for the optimal arm based on any $\theta$.
\begin{assum}\label{lambda_f}
For a fixed $\theta \in \mathbbm{R}^d$,
let $\tilde X_{\theta,t} = \argmax_{a \in [K]} \theta^T x_{t,a}$
Denote $\Sigma_{\theta} = \mathbbm{E}[\tilde X_{\theta,t} \tilde X_{\theta,t}^T]$ and $\lambda_f = \displaystyle{\inf_{\theta}} \lambda_{\min} (\Sigma_{\theta})$. We assume $\lambda_f$ is a positive constant.
\end{assum}
Intuitively, Assumption \ref{lambda_f} means that based on any model parameter $\theta$, the projection of the optimal arm's feature vector onto any direction has positive probability to be non-zero. 
In practice, the optimal arms at different rounds are diverse, so it is reasonable to assume that the projections of these random vectors onto any direction are not always a constant zero.
\begin{remark}\label{diverse_ass}
\cite{wu2020stochastic} makes another version of diversity assumption and proposes the LinUCB-d algorithm to utilize the diversity property of contexts. It requires that all arms could be optimal under certain contexts and that the corresponding feature vectors span $\mathbbm R^d$. Our Assumption \ref{lambda_f} is different from the one in \cite{wu2020stochastic} since we do not require that all the arms could be optimal. Moreover, LinUCB-d only works for linear bandit and cannot be generalized to GLB problems easily. Even in the linear case, the time complexity of LinUCB-d depends quadratically on $d$.
\cite{bastani2020mostly} analyzes the greedy algorithm under a diversity assumption, which assumes the covariance matrix of all the feature vectors lying in any half space is positive definite. Our assumption is different from this since we only make the diversity assumption on the optimal arm under different $\theta$, instead of all the feature vectors. We will include the experimental comparisons with $\epsilon$-greedy algorithms for GLB problems in Section \ref{exps} and show that our algorithm significantly outperforms it.
\end{remark}

\section{PROPOSED ALGORITHM}\label{proposed}
In this section, we formally describe our proposed algorithm. The main idea is to use online stochastic gradient descent (SGD) procedure to estimate the MLE and use Thompson Sampling (TS) to explore. 

For GLM, the MLE from $n$ data points $\{X_i, Y_i\}_{i=1}^n$ is 
    $\hat\theta_n = \argmax_{\theta} \sum_{i=1}^n \left[ Y_i X_i^T \theta - m (X_i^T \theta) \right],$
where $m^\prime (x) = \mu(x)$. Therefore, it is natural to define the loss function at round $t$ to be
$l_t (\theta) = -Y_t X_t^T \theta + m (X_t^T \theta)$.
Effective algorithms in GLB \citep{abeille2017linear,filippi2010parametric,li2017provably,russo2014learning} have been shown to converge to the optimal action at a rate of $\tilde O(\frac{1}{\sqrt{T}})$. Similarly, we need to ensure that online SGD steps will achieve the same fast convergence rate. This rate is only attainable when the loss function is strongly convex. However, the loss function at a single round is convex but not necessarily strongly convex. To tackle this problem, we aggregate the loss function every $\tau$ steps,
where $\tau$ is a parameter to be specified. 
We define the $j$-th aggregated loss function as
\begin{equation}
    l_{j,\tau} (\theta) = \sum_{s=(j-1)\tau + 1}^{j\tau} -Y_s X_s^T \theta + m (X_s^T \theta).
\end{equation}
Let $\alpha$ be a positive constant, we will show in Section \ref{analysis} that when $\tau$ is appropriately chosen based on $\alpha$, the aggregated loss function of $\tau$ rounds is $\alpha$-strongly convex and therefore fast convergence can be obtained.
The gradient and Hessian of $l_{j,\tau}$ are derived as 
\begin{align}\label{gradient}
    \nabla l_{j,\tau} (\theta) &= \sum_{s=(j-1)\tau + 1}^{j\tau} -Y_s X_s  + \mu (X_s^T \theta) X_s,  \\
    \nabla^2 l_{j,\tau} (\theta) &= \sum_{s=(j-1)\tau + 1}^{j\tau} \mu^\prime (X_s^T \theta) X_s X_s^T. 
\end{align}

In the first $\tau$ rounds of the algorithm, we randomly pull arms. Denote $\hat\theta_t$ as the MLE at round $t$ using previous $t$ observations. We calculate the MLE only once at round $\tau$ and get $\hat\theta_{\tau}$. We keep a convex set $\mathcal C = \{\theta: \| \theta-\hat\theta_{\tau} \| \leq 2\}$. 
We will show in Section \ref{analysis} that when $\tau$ is properly chosen, we have $\| \hat\theta_{t} -\theta^*\| \leq 1$ for all $t\geq \tau$. Therefore, for every $t\geq \tau$, we have $\hat\theta_t \in \mathcal{C}$. 
Denote $\tilde \theta_j$ as the $j$-th updated SGD estimator and let $\tilde \theta_0 = \hat\theta_\tau$.
Starting from round $t = \tau+1$, we update $\tilde\theta_j$ every $\tau$ rounds. Since the minimum of the loss function lies in $\mathcal{C}$, we project $\tilde\theta_j$ to the convex set $\mathcal{C}$ (line 9 of Algorithm \ref{sgd_ts}).
Define $\bar\theta_j = \frac{1}{j} \sum_{q=1}^j \tilde\theta_q$, then $\bar\theta_j$ is treated as the posterior mean of $\theta^*$ and we use TS to ensure sufficient exploration. Specifically, we draw $\theta^{\text{TS}}_j$ from a multivariate Gaussian distribution with mean $\bar\theta_j$ and covariance matrix
\begin{equation}\label{cov}
    A_j = \left( \frac{2c_3 g_1(j) ^2}{\alpha j} +  \frac{2g_2(j)^2}{j} \right) I_d,
\end{equation}
where $g_1(j)$ and $g_2(j)$ are defined as

\begin{align}
g_1 (j) &= \frac{R}{c_1} \sqrt{\frac{d}{2} \log(1+\frac{2j\tau}{d}) + 2\log T} \label{g1}\\
g_2 (j) &= \frac{\tau}{\alpha} \sqrt{1+\log j}. \label{g2}
\end{align}

Previous works \citep{filippi2010parametric,li2017provably,jun2017scalable} in GLB use $V_{t+1}^{-1}$ as the covariance matrix, where $V_{t+1}  = \sum_{s=1}^t X_s X_s^T$. In contrast, we use $\frac{2c_3 g_1(j)^2}{\alpha j} I_d$ to approximate $V_{j\tau + 1}^{-1}$. Meanwhile, the covariance matrix in Equation \ref{cov} has an extra second term, which comes from the gap between the averaged SGD estimator $\bar\theta_j$ and the MLE $\hat\theta_{j\tau}$.
Note that similar to the SGD estimator $\tilde\theta_j$, TS estimator $\theta^{\text{TS}}_j$ is updated every $\tau$ rounds. 
At round $t>\tau$, we will pull arm $a_t = \argmax_{a\in [K]} \mu(x_{t,a}^T \theta^{\text{TS}}_j)$, where $j = \lfloor \frac{t-1}{\tau}\rfloor$.
See Figure \ref{notation} for a brief illustration of the notations.
Since our proposed algorithm employs both techniques from online SGD and TS methods, we call our algorithm SGD-TS.
See Algorithm \ref{sgd_ts} for details.
\begin{figure}[ht]
  \begin{center}
    \includegraphics[width=0.46\textwidth]{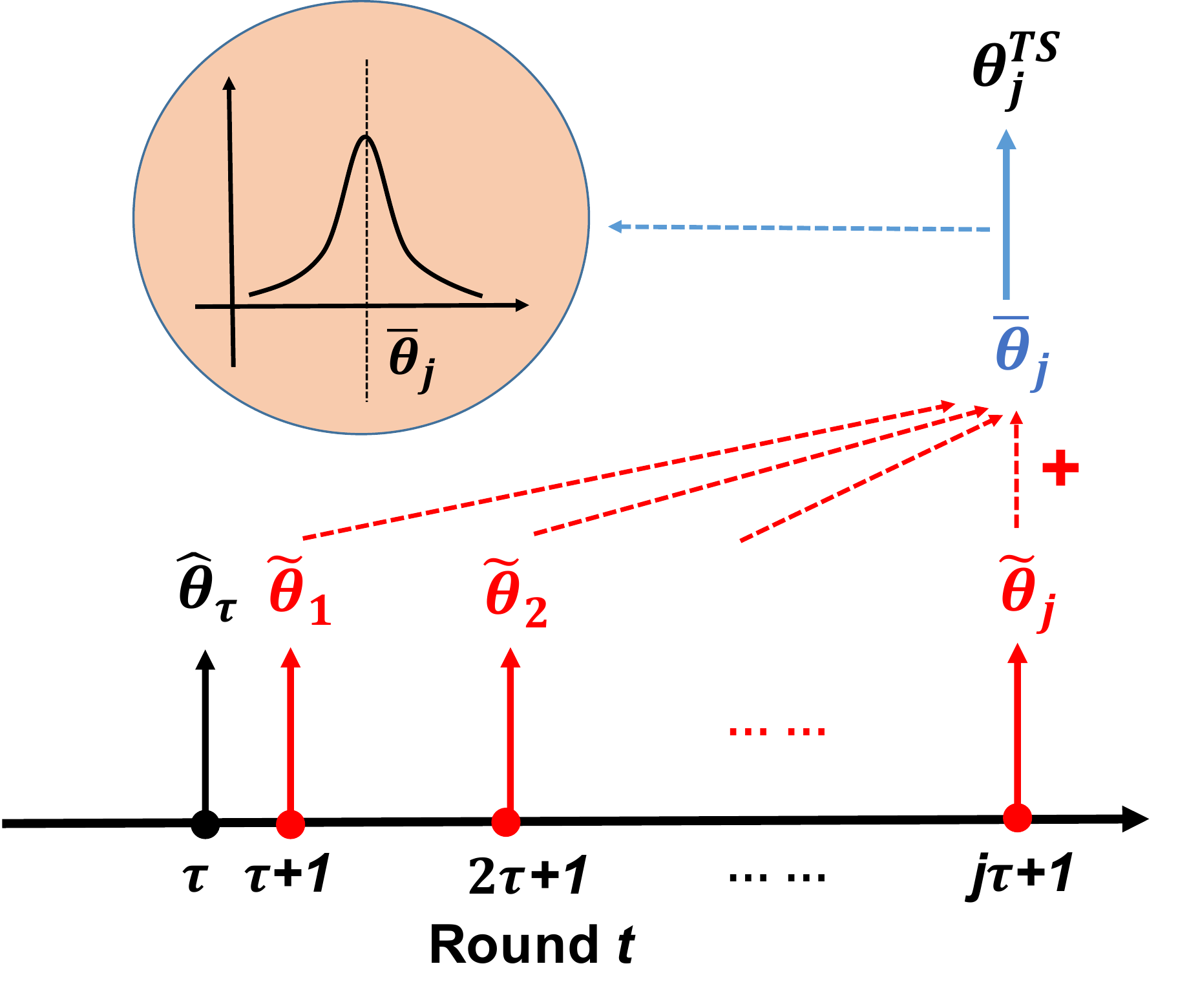}
  \end{center}
  \vspace{-0.18in}
  \caption{Illustration of notations.}
  \label{notation}
\end{figure}

Since some GLB algorithms like UCB-GLM \citep{li2017provably} and GLM-UCB \citep{filippi2010parametric} need to compute MLE every round, to be able to compare the time complexity, we assume the MLE using $t$ datapoints with $d$ features can be solved in $O(td)$ time.
SGD-TS is an extremely efficient algorithm for GLB. 
We only calculate the MLE once at the $\tau$-th round, which costs $O(\tau d)$ time. Then we
update the SGD estimator every $\tau$ rounds and the gradient can be incrementally computed with per-round time $O(d)$. Note that we do not need to calculate matrix inverse every round either since we approximate $V_{t+1}^{-1}$ by a diagonal matrix. In conclusion, the time complexity of SGD-TS in $T$ rounds is $O(Td + d\tau)$, and it will be shown in Section \ref{analysis} that $\tau \sim O(\max\{d,\log T\} / \lambda_f^2)$. In practice, $T$ is usually much greater than $d$, and in such cases, SGD-TS costs $O(Td)$ time.
Our algorithm improves the efficiency significantly if either $d$ or $T$ is large. See Table \ref{time_compare} in Section \ref{exps} for comparisons with other algorithms. 

\begin{algorithm}[ht] 
\caption{Online stochastic gradient descent with Thompson Sampling (SGD-TS)}
\label{sgd_ts}
\textbf{Input}: $T, K, \tau, \alpha$. 
\begin{algorithmic}[1]
   \STATE Randomly choose $a_t \in [K]$ and record $X_t$, $Y_t$ for $t\in [\tau]$.
   \STATE Calculate the maximum-likelihood estimator $\hat\theta_{\tau}$ by solving $\sum_{t=1}^\tau (Y_t - \mu(X_t^T \theta)) X_t = 0$.
   \STATE Maintain convex set $\mathcal C = \{\theta: \| \theta-\hat\theta_{\tau} \| \leq 2\}$.
   \STATE $\tilde\theta_0 \gets \hat\theta_{\tau}$.
   \FOR{$t = \tau+1$ {\bfseries to} $T$}
        \IF{$t \% \tau = 1$}
            \STATE $j\gets \lfloor (t-1) / \tau \rfloor$ and $\eta_j = \frac{1}{\alpha j}$.
            \STATE Calculate $\nabla l_{j,\tau}$ defined in Equation \ref{gradient} 
            \STATE Update
            $\tilde\theta_j \gets \prod_{\mathcal{C}} \left( \tilde\theta_{j-1} - \eta_j \nabla l_{j,\tau} (\tilde\theta_{j-1}) \right)
            $. 
            \STATE Compute $\bar\theta_j = \frac{1}{j} \sum_{q=1}^j \tilde\theta_q$.
            \STATE Compute $A_j$ defined in Equation \ref{cov}.
            \STATE Draw $\theta^{\text{TS}}_{j} \sim \mathcal N \left(\bar\theta_j, A_j \right)$.
        \ENDIF
        \STATE Pull arm $a_t \gets \argmax_{a\in [K]} \mu (x_{t,a}^T \theta^{\text{TS}}_{j} )$ and observe reward $Y_t$.
   \ENDFOR
\end{algorithmic}
\end{algorithm}

\section{MATHEMATICAL ANALYSIS}\label{analysis}
In this section, we formally analyze Algorithm \ref{sgd_ts}. Proofs are deferred to supplementary materials.
\subsection{Convergence of SGD update}
\begin{lem}\label{close}
Denote $V_{t+1} = \sum_{s=1}^t X_s X_s^T$. If $\lambda_{\min} (V_{t+1}) \geq \frac{16 R^2 [d+\log (\frac{1}{\delta_1})]}{c_1^2},$ where $\delta_1$ is a small probability, then $\|\hat\theta_t - \theta^* \| \leq 1$ holds with probability at least $1-\delta_1$.
\end{lem}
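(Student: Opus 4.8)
The plan is to adapt the classical consistency argument for the generalized-linear maximum-likelihood estimator to the bandit filtration, following \cite{li2017provably}. Write $Y_s = \mu(X_s^T\theta^*) + \epsilon_s$, let $Z_t := \sum_{s=1}^t \epsilon_s X_s$ be the accumulated noise, and let $G_t(\theta) := \sum_{s=1}^t \big(\mu(X_s^T\theta) - \mu(X_s^T\theta^*)\big)X_s$ be the noiseless score map, so that $G_t(\theta^*) = 0$. Recalling $m' = \mu$, the negative log-likelihood $f_t(\theta) := \sum_{s=1}^t \big[m(X_s^T\theta) - Y_s X_s^T\theta\big]$ has gradient $\nabla f_t(\theta) = G_t(\theta) - Z_t$, is strictly convex because $V_{t+1} = \sum_{s=1}^t X_s X_s^T$ is positive definite (its minimum eigenvalue is positive by hypothesis, which also makes $\|\cdot\|_{V_{t+1}^{-1}}$ well defined), and is minimized at $\hat\theta_t$; in particular the first-order condition reads $G_t(\hat\theta_t) = Z_t$.

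Next I would prove a restricted strong-monotonicity bound for $G_t$. By the mean value theorem in integral form, $G_t(\theta) = \big(\int_0^1 \sum_{s=1}^t \mu'\big(X_s^T(\theta^* + v(\theta-\theta^*))\big)X_sX_s^T\,dv\big)(\theta-\theta^*)$. For any $\theta$ with $\|\theta - \theta^*\| \le 1$, every point $\theta^* + v(\theta-\theta^*)$ with $v\in[0,1]$ lies within distance $1$ of $\theta^*$, so Assumption~\ref{mu_prime} (which yields $c_1 \ge c_3 > 0$) together with $\|X_s\| \le 1$ gives $\mu'\big(X_s^T(\theta^* + v(\theta-\theta^*))\big) \ge c_1$ along the whole segment; contracting the displayed identity with $\theta-\theta^*$ then yields $\langle G_t(\theta),\theta-\theta^*\rangle \ge c_1\,(\theta-\theta^*)^T V_{t+1}(\theta-\theta^*) = c_1\|\theta-\theta^*\|_{V_{t+1}}^2$ for all $\theta$ in the closed unit ball around $\theta^*$.

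The technical heart is to control the noise in the $V_{t+1}^{-1}$ geometry: with probability at least $1-\delta_1$, $\|Z_t\|_{V_{t+1}^{-1}}^2 \le 16R^2\big(d + \log(1/\delta_1)\big)$. Since $\|Z_t\|_{V_{t+1}^{-1}}^2 = \sup_{u\neq 0} (u^TZ_t)^2 / (u^TV_{t+1}u)$, and for each fixed direction $u$ the quantity $(\sum_{s=1}^t \epsilon_s\, u^TX_s)/\sqrt{\sum_{s=1}^t (u^TX_s)^2}$ is a self-normalized sum of conditionally $R$-sub-Gaussian increments (the $\epsilon_s$ are $R$-sub-Gaussian given $\mathcal{F}_{s-1}$, which already contains $X_s$, and $|u^TX_s|\le 1$), the bound follows by combining the scalar self-normalized martingale inequality with a union bound over an $\varepsilon$-net of the unit sphere, whose cardinality $e^{O(d)}$ is what produces the additive $d$ term; this is the concentration argument of \cite{li2017provably}, and the constant $16$ in the lemma's hypothesis is calibrated to it. I expect this to be the main obstacle, both because it requires the net argument rather than the off-the-shelf determinant-based self-normalized bound (which would carry a spurious $\log t$ factor) and because the martingale structure has to be invoked carefully given that $X_s$ depends on the past.

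Finally I would assemble the pieces by a no-escape argument. On the event above, for any $\theta$ with $\|\theta-\theta^*\| = 1$ put $\rho := \|\theta-\theta^*\|_{V_{t+1}} \ge \sqrt{\lambda_{\min}(V_{t+1})}$; then
\[
\langle \nabla f_t(\theta),\,\theta-\theta^*\rangle \;=\; \langle G_t(\theta),\theta-\theta^*\rangle - \langle Z_t,\theta-\theta^*\rangle \;\ge\; c_1\rho^2 - \|Z_t\|_{V_{t+1}^{-1}}\,\rho \;=\; \rho\big(c_1\rho - \|Z_t\|_{V_{t+1}^{-1}}\big) \;\ge\; 0,
\]
because $c_1\rho \ge c_1\sqrt{\lambda_{\min}(V_{t+1})} \ge 4R\sqrt{d + \log(1/\delta_1)} \ge \|Z_t\|_{V_{t+1}^{-1}}$ by the hypothesis on $\lambda_{\min}(V_{t+1})$ and the concentration bound. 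Since $f_t$ is strictly convex and $\langle\nabla f_t(\theta),\theta-\theta^*\rangle \ge 0$ on the entire sphere $\{\|\theta-\theta^*\| = 1\}$, along any ray $r \mapsto f_t(\theta^* + ru)$ with $\|u\| = 1$ the (convex) restriction has nonnegative derivative at $r = 1$ and is therefore nondecreasing for $r \ge 1$; hence the unique minimizer $\hat\theta_t$ cannot lie outside the closed unit ball around $\theta^*$, i.e., $\|\hat\theta_t - \theta^*\| \le 1$, which holds on an event of probability at least $1-\delta_1$, as claimed.
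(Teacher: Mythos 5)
Your proof is correct, and it shares the paper's skeleton up to the last step: you define the same score map $G_t$, use the same mean-value expansion $G_t(\theta)=\bigl(\int_0^1\sum_s\mu'(X_s^T(\theta^*+v(\theta-\theta^*)))X_sX_s^T\,dv\bigr)(\theta-\theta^*)$ with $\mu'\ge c_1$ on the unit ball around $\theta^*$, and invoke exactly the same noise bound $\|Z_t\|_{V_{t+1}^{-1}}\le 4R\sqrt{d+\log(1/\delta_1)}$ (Lemma~7 of \cite{li2017provably}, whose covering-number proof you correctly sketch and whose constant matches the $16$ in the hypothesis). Where you genuinely diverge is in how the localization is concluded. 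The paper establishes that $G_t$ is injective, lower-bounds $\|G_t(\theta)\|_{V_{t+1}^{-1}}$ on $\mathbbm{B}_\eta$, and then appeals to a topological inverse-mapping lemma (Lemma~A of Chen et al.\ 1999) to deduce that the solution of $G_t(\hat\theta_t)=Z_t$ lies in $\mathbbm{B}_\eta$. You instead exploit the optimization characterization of the MLE: you show $\langle\nabla f_t(\theta),\theta-\theta^*\rangle\ge\rho(c_1\rho-\|Z_t\|_{V_{t+1}^{-1}})\ge 0$ on the unit sphere and conclude by convexity that the (strictly convex) objective cannot attain its minimum outside the ball. Your route is more self-contained --- it avoids the external Lemma~A and only needs one-dimensional convexity along rays --- at the cost of leaning on strict convexity of $f_t$ to rule out a minimizer outside the ball when the boundary directional derivative vanishes; this is the same implicit $\mu'>0$ assumption the paper itself makes when asserting injectivity, so neither argument is weaker on that account. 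Both approaches yield the identical conclusion with identical constants.
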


From Lemma \ref{close}, we have $\hat\theta_t \in \mathcal{C}$ with probability at least $1-\delta_1$ 
when $t \geq \tau$ as long as $\tau$ is properly chosen. This is essential because the SGD estimator is projected to $\mathcal{C}$.
In Lemma \ref{convergence_sgd}, we show that when $\tau$ is chosen as Equation \ref{tau}, 
the averaged SGD estimator $\bar\theta_j$ converges to MLE at a rate of $\tilde O(\frac{1}{\sqrt{j}})$. 

\begin{lem}\label{convergence_sgd}
For a constant $\alpha > 0$, let 
\begin{align}\label{tau}
    \tau_1 &= \left( \frac{C_1 \sqrt{d} + C_2 \sqrt{2\log T}}{\sigma_0^2} \right)^2 + \frac{32 R^2 [d+2\log T]}{c_1^2 \sigma_0^2}, \nonumber\\
    \tau_2 &= \left( \frac{C_1 \sqrt{d} + C_2 \sqrt{3\log T}}{\lambda_f} \right)^2 + \frac{2\alpha}{c_3 \lambda_f}, \nonumber \\
    \tau &= \lceil \max \{ \tau_1, \tau_2\}\rceil,
\end{align}
where $C_1$ and $C_2$ are two universal constants,
then with probability at least $1-\frac{3}{T^2}$, the following holds when $j\geq 1$, 
\begin{equation*}
     \| \bar\theta_j - \hat\theta_{j\tau} \| \leq \frac{\tau}{\alpha} \sqrt{\frac{1+\log j}{j}}.
\end{equation*}
\end{lem}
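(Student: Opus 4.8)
The plan is to bound $\|\bar\theta_j - \hat\theta_{j\tau}\|$ by first establishing a per-epoch strong convexity property of the aggregated loss $l_{j,\tau}$, then invoking the standard convergence guarantee for projected online SGD on strongly convex functions with step size $\eta_j = 1/(\alpha j)$. The key structural point is that although each single-round loss is merely convex, the Hessian $\nabla^2 l_{j,\tau}(\theta) = \sum_{s=(j-1)\tau+1}^{j\tau} \mu'(X_s^T\theta) X_s X_s^T$ is lower bounded (using Assumption \ref{mu_prime} to get $\mu'(X_s^T\theta) \geq c_3$ on the relevant neighborhood, which holds since $\theta \in \mathcal{C}$ and, by Lemma \ref{close}, $\hat\theta_\tau$ is within distance $1$ of $\theta^*$ so $\mathcal{C}$ lies within distance $3$ — actually we need the radius bookkeeping to land inside the $c_3$ region, so I would be careful that the constant in $\mathcal{C}$ and in $c_\eta$ are compatible) by $c_3 \lambda_{\min}\big(\sum_{s} X_s X_s^T\big)$. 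Thus the job reduces to showing that the per-epoch design matrix $\sum_{s=(j-1)\tau+1}^{j\tau} X_s X_s^T$ has minimum eigenvalue at least $\alpha/c_3$, which is exactly what forces the choice of $\tau$.

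The heart of the argument — and the main obstacle — is the lower bound on $\lambda_{\min}$ of the per-epoch design matrix. Here is where Assumption \ref{lambda_f} enters: the pulled arms $X_s$ for $s$ in epoch $j$ are the greedy-optimal arms with respect to $\theta^{\text{TS}}_{j-1}$ (a fixed vector given $\mathcal{F}_{(j-1)\tau}$), so conditioned on the past, $X_s = \tilde X_{\theta^{\text{TS}}_{j-1}, s}$ are IID draws with second-moment matrix $\Sigma_{\theta^{\text{TS}}_{j-1}} \succeq \lambda_f I_d$. I would apply a matrix Chernoff / matrix Bernstein concentration inequality (this is where the universal constants $C_1, C_2$ and the $\sqrt{d}$ and $\sqrt{\log T}$ terms come from) to show that, with probability at least $1 - 1/T^3$ say, the empirical sum over the $\tau$ samples in the epoch satisfies $\lambda_{\min}\big(\sum_s X_s X_s^T\big) \geq \tau \lambda_f - (C_1\sqrt{d} + C_2\sqrt{3\log T})\sqrt{\tau} \geq \alpha/c_3$, the last inequality holding precisely because $\tau \geq \tau_2$. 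A union bound over the at most $T$ epochs (and combining with the $\delta_1$-event from Lemma \ref{close} with $\delta_1 = 1/T^2$, using $\tau \geq \tau_1$ to verify its hypothesis on $\lambda_{\min}(V_{\tau+1})$ via the $\sigma_0^2$ regularity assumption and another matrix concentration step) gives the stated overall probability $1 - 3/T^2$.

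Given the strong convexity parameter $\alpha$ for every aggregated loss $l_{j,\tau}$ on $\mathcal{C}$, I would then quote the classical result (e.g. Hazan's online convex optimization text, which the paper already cites): projected SGD on $\alpha$-strongly convex losses with $\eta_j = 1/(\alpha j)$ satisfies $\sum_{q=1}^{j}\big(l_{q,\tau}(\tilde\theta_{q-1}) - l_{q,\tau}(\theta)\big) \leq \frac{G^2}{2\alpha}(1+\log j)$ for any $\theta \in \mathcal{C}$, where $G$ bounds the epoch gradient norm $\|\nabla l_{q,\tau}\|$. Here I need $G \lesssim \tau$: indeed $\|\nabla l_{q,\tau}(\theta)\| = \|\sum_s (\mu(X_s^T\theta) - Y_s)X_s\| \leq \tau$ since $\mu \in [0,1]$, $Y_s \in [0,1]$, and $\|X_s\| \leq 1$. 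Then a standard online-to-batch conversion argument — using that $\hat\theta_{j\tau}$ minimizes $\frac{1}{j}\sum_{q=1}^j l_{q,\tau}$, and applying strong convexity of this averaged loss around its minimizer — converts the regret bound into $\|\bar\theta_j - \hat\theta_{j\tau}\|^2 \leq \frac{2}{\alpha j} \cdot \frac{G^2(1+\log j)}{2\alpha j} = \frac{\tau^2(1+\log j)}{\alpha^2 j^2}$, i.e. $\|\bar\theta_j - \hat\theta_{j\tau}\| \leq \frac{\tau}{\alpha}\sqrt{\frac{1+\log j}{j}}$, matching the claim exactly.

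The step I expect to be genuinely delicate is not the SGD convergence (routine) but the matrix concentration for the per-epoch minimum eigenvalue: one must be careful that the samples in epoch $j$ are IID \emph{conditionally} on $\mathcal{F}_{(j-1)\tau}$ with a second-moment matrix that depends on the random vector $\theta^{\text{TS}}_{j-1}$, so the bound $\Sigma_\theta \succeq \lambda_f I_d$ must be the uniform-over-$\theta$ bound from Assumption \ref{lambda_f} in order for the concentration to hold simultaneously for all epochs via a union bound. Getting the constants to line up so that $\tau \geq \tau_2$ yields exactly $\lambda_{\min} \geq \alpha/c_3$ (hence $\alpha$-strong convexity, not just $(\alpha - o(1))$-strong convexity) is the bookkeeping crux.
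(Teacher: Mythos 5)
Your proposal follows essentially the same route as the paper's proof: Lemma \ref{close} plus the choice $\tau \geq \tau_1$ places $\hat\theta_{j\tau}$ in $\mathcal{C} \subset \mathbbm{B}_3$; Assumption \ref{mu_prime} lower-bounds the epoch Hessian by $c_3\sum_s X_sX_s^T$; the conditional IID structure of the arms pulled within an epoch (given $\theta^{\text{TS}}_{j-1}$) together with Assumption \ref{lambda_f}, matrix concentration (the paper's Proposition 1 from \cite{li2017provably}), $\tau \geq \tau_2$, and a union bound yield $\alpha$-strong convexity of every $l_{j,\tau}$ with the stated $1-\frac{3}{T^2}$ probability; and Hazan's logarithmic-regret bound with $G\leq\tau$, Jensen, and strong convexity around the minimizer $\hat\theta_{j\tau}$ give the final inequality. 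The only blemish is an arithmetic slip in your last display: the averaged loss $\frac{1}{j}\sum_q l_{q,\tau}$ is $\alpha$-strongly convex (not $j\alpha$), so the correct intermediate bound is $\|\bar\theta_j-\hat\theta_{j\tau}\|^2 \leq \frac{2}{\alpha}\cdot\frac{G^2(1+\log j)}{2\alpha j} = \frac{\tau^2(1+\log j)}{\alpha^2 j}$, whose square root is exactly the claimed $\frac{\tau}{\alpha}\sqrt{\frac{1+\log j}{j}}$ — your stated conclusion is right even though the displayed intermediate quantity has an extra factor of $\frac{1}{j}$.
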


\subsection{Concentration events}
By the property of MLE and Lemma \ref{convergence_sgd}, we have the concentration property of SGD estimator. 
\begin{lem}\label{concentrate_e1}
Suppose $\tau$ is chosen as in Equation \ref{tau}, and $\alpha \geq c_3$, define $\mathbbm{B}^d_1 = \{x\in \mathbbm{R}^d : \|x\|\leq 1\}$,
we have $E_1 (j)$ holds with probability at least $1-\frac{5}{T^2}$,
where $E_1(j)=\{ x\in \mathbbm{B}^d_1 : |x^T (\bar\theta_j - \theta^*) |    \leq  g_1(j)  \|x\|_{V_{j\tau+1}^{-1}} + g_2 (j) \frac{\|x\|}{\sqrt{j}} \}$ and 
$g_1(j)$ and $g_2(j)$ are defined in Equation \ref{g1} and Equation \ref{g2}.
\end{lem}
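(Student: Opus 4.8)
The plan is to split the error as
\[
x^{T}(\bar\theta_j-\theta^*)=x^{T}(\bar\theta_j-\hat\theta_{j\tau})+x^{T}(\hat\theta_{j\tau}-\theta^*),
\]
absorbing the first (``SGD-to-MLE'') term into the $g_2(j)\|x\|/\sqrt{j}$ budget of $E_1(j)$ and the second (``MLE-to-truth'') term into the $g_1(j)\|x\|_{V_{j\tau+1}^{-1}}$ budget. The first term needs nothing beyond the convergence rate already proved: on the event of Lemma~\ref{convergence_sgd}, which holds for all $j\ge1$ with probability at least $1-3/T^{2}$, Cauchy--Schwarz gives
\[
|x^{T}(\bar\theta_j-\hat\theta_{j\tau})|\le\|x\|\,\|\bar\theta_j-\hat\theta_{j\tau}\|\le\|x\|\,\frac{\tau}{\alpha}\sqrt{\frac{1+\log j}{j}}=g_2(j)\,\frac{\|x\|}{\sqrt{j}},
\]
using the definition of $g_2$ in Equation~\ref{g2}.

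For the second term I would reproduce the generalized-linear MLE concentration argument of \cite{li2017provably}. Subtracting the population estimating equation from the optimality condition $\sum_{s=1}^{j\tau}(Y_s-\mu(X_s^{T}\hat\theta_{j\tau}))X_s=0$ and expanding $\mu$ by the mean value theorem yields $G_{j\tau}(\hat\theta_{j\tau}-\theta^*)=\sum_{s=1}^{j\tau}\epsilon_sX_s$ with $G_{j\tau}=\sum_{s}\mu'(X_s^{T}\xi_s)X_sX_s^{T}$ for suitable $\xi_s$ between $\hat\theta_{j\tau}$ and $\theta^*$. On the event $\|\hat\theta_{j\tau}-\theta^*\|\le1$ supplied by Lemma~\ref{close}, Assumption~\ref{mu_prime} (which gives $c_1\ge c_3>0$) forces $\mu'(X_s^{T}\xi_s)\ge c_1$, so $G_{j\tau}\succeq c_1V_{j\tau+1}$ and hence
\[
|x^{T}(\hat\theta_{j\tau}-\theta^*)|\le\|x\|_{G_{j\tau}^{-1}}\Big\|\sum_{s}\epsilon_sX_s\Big\|_{G_{j\tau}^{-1}}\le\frac{1}{c_1}\|x\|_{V_{j\tau+1}^{-1}}\Big\|\sum_{s}\epsilon_sX_s\Big\|_{V_{j\tau+1}^{-1}}.
\]
The self-normalized martingale inequality (Theorem~1 of \cite{abbasi2011improved}), applied with confidence parameter $T^{-2}$ and using $\|X_s\|\le1$ with $\epsilon_s$ conditionally $R$-sub-Gaussian, bounds the last factor by $R\sqrt{\tfrac d2\log(1+\tfrac{2j\tau}{d})+2\log T}=c_1g_1(j)$ (cf.\ Equation~\ref{g1}) with probability at least $1-1/T^{2}$, simultaneously for all $j$. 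Combining the two displays and taking a union bound over the event of Lemma~\ref{convergence_sgd} ($3/T^{2}$), the event of Lemma~\ref{close} with $\delta_1=T^{-2}$ ($1/T^{2}$), and the self-normalized bound ($1/T^{2}$) gives $E_1(j)$ with probability at least $1-5/T^{2}$.

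The step I expect to be the main obstacle is not the concentration itself but verifying that Lemma~\ref{close} is \emph{applicable} at time $j\tau$, i.e.\ that $\lambda_{\min}(V_{j\tau+1})\ge 16R^{2}[d+\log(1/\delta_1)]/c_1^{2}$. Because arms chosen after round $\tau$ come from Thompson Sampling and are therefore adaptive rather than i.i.d., this well-conditioning has to be earned through the diversity Assumption~\ref{lambda_f} plus a matrix Bernstein/Freedman argument over the blocks, and it is exactly here that the specific value of $\tau$ in Equation~\ref{tau} (with its $\lambda_f$ and $\sigma_0$ terms) is used. Since the same lower bound on $\lambda_{\min}(V_{j\tau+1})$ is already established inside the proof of Lemma~\ref{convergence_sgd} — it is what makes every aggregated Hessian $\succeq\alpha I$ — I would import it from there rather than reprove it, after which the remaining ingredients are routine; the only extra care needed is making the confidence levels and the union bound add up to exactly $5/T^{2}$ uniformly over $j$.
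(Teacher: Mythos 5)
Your proposal is correct and follows essentially the same route as the paper: the same decomposition $x^T(\bar\theta_j-\theta^*)=x^T(\bar\theta_j-\hat\theta_{j\tau})+x^T(\hat\theta_{j\tau}-\theta^*)$, with the first term absorbed into $g_2(j)\|x\|/\sqrt{j}$ via Lemma~\ref{convergence_sgd} and Cauchy--Schwarz, the second into $g_1(j)\|x\|_{V_{j\tau+1}^{-1}}$ via the MLE concentration (which the paper simply imports as Lemma~3 of \cite{li2017provably} with $\delta_3=1/T^2$ rather than re-deriving it as you do), and a union bound summing to $5/T^2$. The obstacle you flag at the end is a non-issue: no block-wise matrix Bernstein argument is needed here, since $V_{j\tau+1}\succeq V_{\tau+1}$ and the first $\tau$ pulls are uniformly random, so Proposition~\ref{prop} applied with $\sigma_0^2$ (the $\tau_1$ term in Equation~\ref{tau}) already guarantees the eigenvalue condition for all $j\ge 1$ at the cost of the single $1/T^2$ you budgeted.
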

The following lemma shows the concentration property of TS estimator.
\begin{lem}\label{ts_concentrate}
Define $u = \sqrt{2 \log (K\tau T^2)}$, we have $\mathbbm{P}(E_2(j) | \mathcal{F}_{j\tau}) \geq 1-\frac{1}{T^2}$, where $E_2(j)$ is defined as the set of all the vectors $x \in \left\{\cup_{t=j\tau+1}^{(j+1)\tau}\mathcal{A}_{t}\right\}$ such that the following inequality holds
\begin{equation*}
     |x^T (\bar\theta_j - \theta^{\text{TS}}_j) | \leq  u \sqrt{  \frac{2c_3 g_1 (j)^2}{\alpha j}\|x\|^2 + 2g_2(j)^2 \frac{\|x\|^2}{j} }.
\end{equation*}
\end{lem}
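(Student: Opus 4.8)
The plan is to prove Lemma \ref{ts_concentrate} by a direct Gaussian tail bound, conditioning on the $\sigma$-algebra $\mathcal{F}_{j\tau}$, since given $\mathcal{F}_{j\tau}$ the estimators $\bar\theta_j$ and the covariance matrix $A_j$ are deterministic, and $\theta^{\text{TS}}_j \sim \mathcal{N}(\bar\theta_j, A_j)$ with $A_j = \left(\frac{2c_3 g_1(j)^2}{\alpha j} + \frac{2 g_2(j)^2}{j}\right) I_d$. First I would note that for any fixed vector $x$, the scalar $x^T(\bar\theta_j - \theta^{\text{TS}}_j)$ is, conditionally on $\mathcal{F}_{j\tau}$, a univariate Gaussian with mean $0$ and variance $x^T A_j x = \left(\frac{2c_3 g_1(j)^2}{\alpha j} + \frac{2 g_2(j)^2}{j}\right)\|x\|^2$. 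By the standard Gaussian tail inequality $\mathbbm{P}(|Z| > u \sigma) \leq e^{-u^2/2}$ for $Z \sim \mathcal{N}(0,\sigma^2)$, with $u = \sqrt{2\log(K\tau T^2)}$ we get that the stated inequality fails for this particular $x$ with probability at most $e^{-u^2/2} = \frac{1}{K\tau T^2}$.

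Second, I would handle the fact that $E_2(j)$ quantifies over all $x \in \cup_{t=j\tau+1}^{(j+1)\tau}\mathcal{A}_t$ rather than a single fixed $x$. The key subtlety is that the contexts $\mathcal{A}_t$ for $t = j\tau+1,\dots,(j+1)\tau$ are not $\mathcal{F}_{j\tau}$-measurable; however, they are drawn IID from the context distribution and, crucially, are independent of the Gaussian randomness used to draw $\theta^{\text{TS}}_j$. So I would argue as follows: first condition further on the contexts $\{\mathcal{A}_t\}_{t=j\tau+1}^{(j+1)\tau}$ as well, making the full set of candidate vectors a fixed collection of at most $K\tau$ vectors; apply the single-vector Gaussian tail bound to each; take a union bound over these at most $K\tau$ vectors, which gives failure probability at most $K\tau \cdot \frac{1}{K\tau T^2} = \frac{1}{T^2}$; and finally observe that this bound holds uniformly over all realizations of the contexts, hence it survives taking expectation over the contexts. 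This yields $\mathbbm{P}(E_2(j)^c \mid \mathcal{F}_{j\tau}) \leq \frac{1}{T^2}$, i.e., $\mathbbm{P}(E_2(j)\mid \mathcal{F}_{j\tau}) \geq 1 - \frac{1}{T^2}$ as claimed.

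The one genuine point requiring care — and the step I expect to be the main obstacle — is justifying the interchange of conditioning: we need that, given $\mathcal{F}_{j\tau}$, the randomness of $\theta^{\text{TS}}_j$ (the injected Gaussian noise) is independent of the future contexts $\mathcal{A}_t$, $t > j\tau$. This holds by construction of Algorithm \ref{sgd_ts}: the Gaussian draw at step 12 uses fresh external randomness depending only on $\bar\theta_j$ and $A_j$ (both $\mathcal{F}_{j\tau}$-measurable), and the contexts are drawn IID independently of the algorithm's internal randomness. Making this rigorous amounts to noting that the conditional law of $x^T(\bar\theta_j - \theta^{\text{TS}}_j)$ given $\sigma(\mathcal{F}_{j\tau}, \mathcal{A}_{j\tau+1},\dots,\mathcal{A}_{(j+1)\tau})$ is still $\mathcal{N}(0, x^T A_j x)$ for each such fixed $x$, after which the union bound and the tower property finish the argument. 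Everything else is a routine Gaussian tail computation plus bookkeeping of the constant in $u$.
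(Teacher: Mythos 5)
Your proposal is correct and follows essentially the same route as the paper's proof: a conditional Gaussian tail bound for each fixed context vector (the paper conditions on $x_{t,a}=x$ and integrates against the conditional density, which is exactly your tower-property step), followed by a union bound over the at most $K\tau$ vectors in $\cup_{t=j\tau+1}^{(j+1)\tau}\mathcal{A}_{t}$. The only cosmetic difference is the constant in the tail bound (the paper uses $\frac{1}{\sqrt{\pi}}e^{-u^2/2}$ from Abramowitz--Stegun, you use $e^{-u^2/2}$); both suffice for the stated $1-\frac{1}{T^2}$ guarantee.
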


The above two lemmas show that the TS estimator $\theta^{\text{TS}}_j$ is concentrated around the true model parameter $\theta^*$.
Lemma \ref{anti_concentrate} below offers the anti-concentration property of TS estimator,  which ensures that we have 
enough exploration for the optimal arm. 
\begin{lem}
\label{anti_concentrate}
Denote $j_t = \lfloor \frac{t-1}{\tau}\rfloor$. For any filtration $\mathcal{F}_{t}$ such that $E_1(j_t) \cap \{ \lambda_{\min} (V_{j_t\tau+1}) \geq \frac{\alpha j_t}{c_3} \}$ is true, we have
    $\mathbbm{P}\left( x_{t,*}^T \theta^{\text{TS}}_{j_t}  > x_{t,*}^T \theta^* | \mathcal{F}_{j_t\tau} \right) \geq \frac{1}{4\sqrt{\pi e}}.$
\end{lem}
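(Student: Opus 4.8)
The plan is to use that, conditionally on $\mathcal{F}_{j_t\tau}$, the scalar $x_{t,*}^T\theta^{\text{TS}}_{j_t}$ is Gaussian, and to reduce the claim to a standard Gaussian lower-tail bound by checking that the conditional mean of this Gaussian is within one conditional standard deviation of the target $x_{t,*}^T\theta^*$. Concretely, write $\theta^{\text{TS}}_{j_t}=\bar\theta_{j_t}+\sigma_{j_t}\xi$ with $\xi\sim\mathcal{N}(0,I_d)$ independent of $\mathcal{F}_{j_t\tau}$ and of the round-$t$ context $\mathcal{A}_t$, where $\sigma_{j_t}^2:=\tfrac{2c_3 g_1(j_t)^2}{\alpha j_t}+\tfrac{2g_2(j_t)^2}{j_t}$ is the common diagonal entry of $A_{j_t}$ in Equation \ref{cov}. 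Then, conditioning additionally on $\mathcal{A}_t$ (which makes $x_{t,*}$ a fixed vector of norm $\le 1$ and leaves the law of $\xi$ unchanged), $x_{t,*}^T\theta^{\text{TS}}_{j_t}\sim\mathcal{N}\big(x_{t,*}^T\bar\theta_{j_t},\,\sigma_{j_t}^2\|x_{t,*}\|^2\big)$. Note that the conditioning event $E_1(j_t)\cap\{\lambda_{\min}(V_{j_t\tau+1})\ge\alpha j_t/c_3\}$ involves only $\bar\theta_{j_t}$, $\theta^*$ and $V_{j_t\tau+1}$, so it is $\mathcal{F}_{j_t\tau}$-measurable and in particular independent of $\mathcal{A}_t$.

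The first step I would carry out is to bound the mean gap $|x_{t,*}^T(\bar\theta_{j_t}-\theta^*)|$. Since $\|x_{t,*}\|\le 1$, $x_{t,*}\in\mathbbm{B}^d_1$, so on $E_1(j_t)$ Lemma \ref{concentrate_e1} gives $|x_{t,*}^T(\bar\theta_{j_t}-\theta^*)|\le g_1(j_t)\|x_{t,*}\|_{V_{j_t\tau+1}^{-1}}+g_2(j_t)\|x_{t,*}\|/\sqrt{j_t}$. On $\{\lambda_{\min}(V_{j_t\tau+1})\ge\alpha j_t/c_3\}$ we have $\|x_{t,*}\|_{V_{j_t\tau+1}^{-1}}^2\le\|x_{t,*}\|^2/\lambda_{\min}(V_{j_t\tau+1})\le\tfrac{c_3}{\alpha j_t}\|x_{t,*}\|^2$. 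Writing $a:=g_1(j_t)\sqrt{c_3/(\alpha j_t)}$ and $b:=g_2(j_t)/\sqrt{j_t}$, so that $\sigma_{j_t}^2=2a^2+2b^2$, the previous two displays give $|x_{t,*}^T(\bar\theta_{j_t}-\theta^*)|\le(a+b)\|x_{t,*}\|\le\sqrt{2(a^2+b^2)}\,\|x_{t,*}\|=\sigma_{j_t}\|x_{t,*}\|$, using the elementary inequality $(a+b)^2\le 2(a^2+b^2)$.

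The second step is to conclude via Gaussian anti-concentration. With $Z\sim\mathcal{N}(0,1)$,
\[
\mathbbm{P}\big(x_{t,*}^T\theta^{\text{TS}}_{j_t}>x_{t,*}^T\theta^*\mid\mathcal{F}_{j_t\tau}\big)
=\mathbbm{P}\!\left(Z>\frac{x_{t,*}^T\theta^*-x_{t,*}^T\bar\theta_{j_t}}{\sigma_{j_t}\|x_{t,*}\|}\,\Big|\,\mathcal{F}_{j_t\tau}\right)
\ge\mathbbm{P}(Z>1)\ge\frac{1}{4\sqrt{\pi e}},
\]
where the middle inequality uses that, by the first step, the argument of the tail is $\le 1$ on the conditioning event together with monotonicity of the survival function, the last inequality is the standard tail estimate $\mathbbm{P}(Z>z)\ge\tfrac{z}{\sqrt{2\pi}(1+z^2)}e^{-z^2/2}$ at $z=1$ (which yields $\tfrac{1}{2\sqrt{2\pi e}}>\tfrac{1}{4\sqrt{\pi e}}$), and the bound is pathwise in $\mathcal{A}_t$ so it survives averaging over the context. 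I do not anticipate a genuine obstacle: the lemma is designed to hold, and the only substantive point is the mean-gap step, i.e. verifying that the two inflation constants $2c_3/\alpha$ and $2$ hard-wired into $A_j$ are exactly enough to dominate the $V^{-1}$-confidence term $g_1$ (through the companion event $\lambda_{\min}(V_{j_t\tau+1})\ge\alpha j_t/c_3$) and the SGD-to-MLE gap term $g_2$ coming from Lemma \ref{convergence_sgd}, respectively, with the extra factor $2$ absorbing the AM-GM cross term; the remaining manipulations are routine.
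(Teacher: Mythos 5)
Your proposal is correct and follows essentially the same route as the paper's proof: condition on $\mathcal{F}_{j_t\tau}$ and the context so that $x_{t,*}^T\theta^{\text{TS}}_{j_t}$ is Gaussian with mean $x_{t,*}^T\bar\theta_{j_t}$ and variance $\sigma_{j_t}^2\|x_{t,*}\|^2$, use $E_1(j_t)$ together with the eigenvalue condition and $(a+b)^2\le 2(a^2+b^2)$ to show the mean gap is at most one standard deviation, apply a Gaussian lower-tail bound, and integrate out the context. The only cosmetic difference is the final tail estimate (you reduce to $z=1$ and use $\mathbbm{P}(Z>1)\ge \tfrac{1}{2\sqrt{2\pi e}}$, while the paper applies the Abramowitz--Stegun anti-concentration inequality at the actual $z\in(0,1]$ to get $\tfrac{1}{4\sqrt{\pi}}e^{-z^2/2}\ge\tfrac{1}{4\sqrt{\pi e}}$); both yield the stated constant.
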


\subsection{Regret analysis}
Using the concentration and anti-concentration properties of TS estimator in Lemma \ref{concentrate_e1}, \ref{ts_concentrate} and \ref{anti_concentrate},
we are able to bound a single-round regret in Lemma \ref{reg_t}.
Denote $\Delta_i(t) = (x_{t,*} - x_{t,i})^T \theta^*$, $j_t = \lfloor \frac{t-1}{\tau}\rfloor$ and 
\begin{align}\label{ht}
    H_i (t) & =  g_1(j_t)  \|x_{t,i}\|_{V_{j_t\tau+1}^{-1}} + g_2 (j_t) \frac{\|x_{t,i}\|}{\sqrt{j_t}} \nonumber \\
    & +  u \sqrt{ \frac{2c_3 g_1 (j_t)^2}{\alpha j_t} \|x_{t,i}\|^2 + 2g_2(j_t)^2 \frac{\|x_{t,i}\|^2}{j_t} }.
\end{align}
\begin{lem}\label{reg_t}
At round $t \geq \tau$, where $\tau$ is defined in Equation \ref{tau}, denote $E_3(j_t) = \{\lambda_{\min} (V_{j_t\tau+1}) \geq \frac{\alpha j_t}{c_3}\}$, we have 
\begin{align}
    & \mathbbm{E} [
    \Delta_{a_t} (t) \mathbbm{1}(E_1(j_t) \cap E_2(j_t) \cap E_3(j_t)) ] \nonumber \\
    & \leq \left(1 + \frac{2}{\frac{1}{4\sqrt{\pi e}} - \frac{1}{T^2}} \right) \mathbbm{E} \left[  H_{a_t} (t) \mathbbm{1}(E_3(j_t) )  \right ].
\end{align}
\end{lem}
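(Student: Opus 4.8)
The plan is to run the standard Thompson Sampling regret decomposition of \cite{agrawal2013thompson}, adapted so that the confidence widths come from Lemma \ref{concentrate_e1} and Lemma \ref{ts_concentrate} and the constant probability of being ``optimistic'' comes from Lemma \ref{anti_concentrate}. Fix $t > \tau$, write $j = j_t = \lfloor(t-1)/\tau\rfloor \geq 1$, and let $\mathcal{G}_t$ be the $\sigma$-field generated by $\mathcal{F}_{j\tau}$ together with the block of contexts $\mathcal{A}_{j\tau+1},\dots,\mathcal{A}_{(j+1)\tau}$. Since $j\tau+1\leq t\leq (j+1)\tau$, the quantities $x_{t,a}$, $\Delta_a(t)$, $H_a(t)$ are $\mathcal{G}_t$-measurable, while $\bar\theta_j$, $A_j$, $E_1(j)$, $E_3(j)$ are $\mathcal{F}_{j\tau}$-measurable, hence also $\mathcal{G}_t$-measurable; given $\mathcal{G}_t$ the only remaining randomness is the Gaussian draw $\theta^{\text{TS}}_j - \bar\theta_j$.

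First I would record the deterministic consequence of the concentration events: on $E_1(j)\cap E_2(j)$, splitting $x_{t,i}^T(\theta^{\text{TS}}_j - \theta^*) = x_{t,i}^T(\theta^{\text{TS}}_j - \bar\theta_j) + x_{t,i}^T(\bar\theta_j - \theta^*)$ and bounding each summand (with $\|x_{t,i}\|\leq 1$) by the corresponding term of $H_i(t)$ gives $|x_{t,i}^T(\theta^{\text{TS}}_j - \theta^*)| \leq H_i(t)$ for every arm $i$. Call arm $i$ \emph{unsaturated} at round $t$ if $\Delta_i(t)\leq H_i(t)$; since $\Delta_{a_t^*}(t)=0$ the unsaturated set $\mathcal{U}(t)$ is nonempty, and I set $\bar a(t)\in\argmin_{i\in\mathcal{U}(t)} H_i(t)$. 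As $\mu$ is strictly increasing, $a_t$ maximizes $x_{t,a}^T\theta^{\text{TS}}_j$, so on $E_1(j)\cap E_2(j)$,
\begin{align*}
\Delta_{a_t}(t) &= \Delta_{\bar a(t)}(t) + \big(x_{t,\bar a(t)} - x_{t,a_t}\big)^T\theta^* \\
&\leq H_{\bar a(t)}(t) + \big(H_{\bar a(t)}(t) + H_{a_t}(t)\big) = 2H_{\bar a(t)}(t) + H_{a_t}(t),
\end{align*}
where the second term uses $x_{t,\bar a(t)}^T\theta^* \leq x_{t,\bar a(t)}^T\theta^{\text{TS}}_j + H_{\bar a(t)}(t) \leq x_{t,a_t}^T\theta^{\text{TS}}_j + H_{\bar a(t)}(t) \leq x_{t,a_t}^T\theta^* + H_{a_t}(t) + H_{\bar a(t)}(t)$.

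The probabilistic core is to lower bound the conditional probability that the pulled arm is unsaturated. On $\Theta_t := \{x_{t,*}^T\theta^{\text{TS}}_j > x_{t,*}^T\theta^*\}$ we have $x_{t,a_t}^T\theta^{\text{TS}}_j \geq x_{t,*}^T\theta^{\text{TS}}_j > x_{t,*}^T\theta^*$, so on $\Theta_t\cap E_1(j)\cap E_2(j)$, $\Delta_{a_t}(t) < x_{t,a_t}^T(\theta^{\text{TS}}_j-\theta^*) \leq H_{a_t}(t)$, i.e. $a_t\in\mathcal{U}(t)$. Hence on the $\mathcal{G}_t$-measurable event $E_1(j)\cap E_3(j)$, Lemma \ref{anti_concentrate} (with $x_{t,*}$ determined by $\mathcal{G}_t$) and Lemma \ref{ts_concentrate} give
\begin{equation*}
\mathbbm{P}(a_t\in\mathcal{U}(t)\mid\mathcal{G}_t) \geq \mathbbm{P}(\Theta_t\mid\mathcal{G}_t) - \mathbbm{P}(E_2(j)^c\mid\mathcal{G}_t) \geq \tfrac{1}{4\sqrt{\pi e}} - \tfrac{1}{T^2}.
\end{equation*}
Since $\bar a(t)$ minimizes $H_i(t)$ over $\mathcal{U}(t)$, we get $\mathbbm{E}[H_{a_t}(t)\mid\mathcal{G}_t] \geq H_{\bar a(t)}(t)\,\mathbbm{P}(a_t\in\mathcal{U}(t)\mid\mathcal{G}_t)$, so on $E_1(j)\cap E_3(j)$, $H_{\bar a(t)}(t) \leq (\tfrac{1}{4\sqrt{\pi e}} - \tfrac{1}{T^2})^{-1}\mathbbm{E}[H_{a_t}(t)\mid\mathcal{G}_t]$. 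Then I take $\mathbbm{E}[\,\cdot\,\mathbbm{1}(E_2(j))\mid\mathcal{G}_t]$ of the displayed regret bound on $E_1(j)\cap E_3(j)$ (valid because $\Delta_{a_t}(t)\mathbbm{1}(E_2(j))\leq 2H_{\bar a(t)}(t)+H_{a_t}(t)$ there), substitute the last inequality, use $\mathbbm{1}(E_1(j))\leq 1$ to replace $\mathbbm{1}(E_1(j)\cap E_3(j))$ by $\mathbbm{1}(E_3(j))$ on the right (legitimate as $H_{a_t}(t)\geq 0$), and take the outer expectation, using the tower property and the $\mathcal{G}_t$-measurability of $E_3(j)$; this produces exactly the stated bound with constant $1 + 2(\tfrac{1}{4\sqrt{\pi e}} - \tfrac{1}{T^2})^{-1}$. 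I expect the main obstacle to be the careful bookkeeping of $\sigma$-algebras: making precise the conditioning in Lemma \ref{anti_concentrate} and Lemma \ref{ts_concentrate} (both phrased relative to $\mathcal{F}_{j\tau}$, though $x_{t,*}$ and $E_2(j)$ also depend on the context block), and tracking which of $E_1$, $E_2$, $E_3$ is used where so that the right-hand side carries only $\mathbbm{1}(E_3(j))$; the remaining steps are routine triangle-inequality and conditional-expectation manipulations.
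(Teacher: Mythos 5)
Your proposal is correct and follows essentially the same route as the paper's proof: the same saturated/unsaturated decomposition with the least-uncertain unsaturated arm, the same triangle-inequality bound $\Delta_{a_t}(t)\leq 2H_{J_t}(t)+H_{a_t}(t)$ on $E_1\cap E_2$, and the same lower bound $\frac{1}{4\sqrt{\pi e}}-\frac{1}{T^2}$ on the probability of pulling an unsaturated arm via Lemmas \ref{ts_concentrate} and \ref{anti_concentrate}. Your choice to condition on the enlarged $\sigma$-field $\mathcal{G}_t$ containing the context block (so that $H_{\bar a(t)}(t)$ is deterministic when you lower-bound $\mathbbm{E}[H_{a_t}(t)\mid\mathcal{G}_t]$) is a minor technical refinement of the paper's Equation \ref{bound_jt} rather than a different argument, and it is supported by the conditional-on-$x$ statements established inside the proofs of Lemmas \ref{ts_concentrate} and \ref{anti_concentrate}.
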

We are now ready to put together the above information and prove the regret bound of Algorithm \ref{sgd_ts}.
\begin{thm}\label{main_thm}
When Algorithm \ref{sgd_ts} runs with $\alpha = \max\{c_3, d, \log T\} / \lambda_f$, and $\tau$ defined in Equation \ref{tau}, the expected total regret satisfies the following inequality
\begin{align*}
& \mathbbm E [R(T)] \leq  \tau + \frac{7}{T} + L_{\mu} p \sqrt{\tau T} \left[2\sqrt{\frac{c_3}{\alpha}} g_1(J) + 2 g_2(J)\right] \\
& + L_{\mu} p \sqrt{\tau T} u \sqrt{ \frac{2c_3 g_1(J)^2}{\alpha} + 2g_2(J)^2} \sqrt{1+\log \lfloor \frac{T}{\tau} \rfloor} ,
\end{align*}
where $u = \sqrt{2 \log (K\tau T^2)}$, $p= 1 + \frac{2}{\frac{1}{4\sqrt{\pi e}} - \frac{1}{T^2}}$ and $J = \lfloor\frac{T}{\tau} \rfloor$.
\end{thm}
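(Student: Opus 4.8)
The plan is to decompose the total regret $R(T) = \sum_{t=1}^T \Delta_{a_t}(t)$ into three parts: the first $\tau$ rounds where we pull arms randomly, the rounds where at least one of the ``good'' events $E_1(j_t)$, $E_2(j_t)$, $E_3(j_t)$ fails, and the rounds where all three events hold. For the first part, since $\mu(x) \in [0,1]$, each round contributes at most $1$, giving a contribution of at most $\tau$. For the second part, I would use Lemmas \ref{close}, \ref{convergence_sgd}, \ref{concentrate_e1}, \ref{ts_concentrate} together with a bound on $\mathbbm{P}(E_3(j_t)^c)$ to show that the failure probability summed over all rounds is $O(1/T)$; combined with the per-round regret bound of $1$, this contributes the $\frac{7}{T}$ term (the constant $7$ coming from adding up the $\frac{5}{T^2}$, $\frac{1}{T^2}$, $\frac{3}{T^2}$, etc. failure probabilities over $T$ rounds, plus the event $E_3$ which should follow from Assumption \ref{lambda_f} via a matrix concentration / Matrix Chernoff argument showing $\lambda_{\min}(V_{j\tau+1}) \geq \frac{\alpha j}{c_3}$ with high probability once $\tau$ is as in Equation \ref{tau}).

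For the main term, I would start from Lemma \ref{reg_t}, which gives $\mathbbm{E}[\Delta_{a_t}(t)\mathbbm{1}(E_1 \cap E_2 \cap E_3)] \leq p\, \mathbbm{E}[H_{a_t}(t)\mathbbm{1}(E_3(j_t))]$ with $p = 1 + \frac{2}{\frac{1}{4\sqrt{\pi e}} - 1/T^2}$. Summing over $t$ from $\tau+1$ to $T$, the task reduces to bounding $\sum_t \mathbbm{E}[H_{a_t}(t)\mathbbm{1}(E_3(j_t))]$. Recalling the definition of $H_i(t)$ in Equation \ref{ht}, there are essentially two kinds of terms: the terms $g_1(j_t)\|x_{t,i}\|_{V_{j_t\tau+1}^{-1}}$ and the terms involving $\|x_{t,i}\|/\sqrt{j_t}$ (both the $g_2$ term and the TS-inflation term under the square root). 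For the $\|x\|_{V^{-1}}$ term, on the event $E_3(j_t) = \{\lambda_{\min}(V_{j_t\tau+1}) \geq \alpha j_t/c_3\}$ we can bound $\|x_{t,i}\|_{V_{j_t\tau+1}^{-1}}^2 \leq \frac{c_3}{\alpha j_t}\|x_{t,i}\|^2 \leq \frac{c_3}{\alpha j_t}$ since $\|x_{t,i}\| \leq 1$; so every term in $H_{a_t}(t)$ is of order $\frac{1}{\sqrt{j_t}}$ times the appropriate combination of $g_1, g_2, u$. Since $j_t = \lfloor (t-1)/\tau\rfloor$, each of the $\tau$ rounds within block $j$ contributes $O(1/\sqrt{j})$, and $\sum_{j=1}^{J} \tau \cdot \frac{1}{\sqrt{j}} \leq \tau \cdot 2\sqrt{J} = 2\sqrt{\tau}\sqrt{\tau J} \leq 2\sqrt{\tau T}$ using $J = \lfloor T/\tau\rfloor$. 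Since $g_1(j)$ and $g_2(j)$ are non-decreasing in $j$, I would upper-bound $g_1(j_t) \leq g_1(J)$, $g_2(j_t) \leq g_2(J)$, and similarly $\sqrt{1+\log j_t} \leq \sqrt{1+\log\lfloor T/\tau\rfloor}$, pull these out of the sum, and collect the $L_\mu$ from the Lipschitz bound $\mu(x_{t,*}^T\theta^*) - \mu(X_t^T\theta^*) \leq L_\mu \Delta_{a_t}(t)$... actually more carefully the regret is $\mu(x_{t,*}^T\theta^*)-\mu(X_t^T\theta^*)$ and $H_i(t)$ bounds $|x_{t,i}^T(\bar\theta_j - \theta^*)|$-type gaps, so the $L_\mu$ enters when converting the reward-scale regret into the $\theta$-scale quantity $\Delta_{a_t}(t)$ bounded by $H_{a_t}(t)$; I'd make sure the $L_\mu$ appears exactly as in the statement.

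Putting it together: the first $g_1$-and-$g_2$ group in the theorem, $L_\mu p\sqrt{\tau T}[2\sqrt{c_3/\alpha}\,g_1(J) + 2g_2(J)]$, comes from summing the first two terms of $H_{a_t}(t)$ (using $\|x\|_{V^{-1}} \leq \sqrt{c_3/(\alpha j_t)}$ and $\|x\|/\sqrt{j_t} \leq 1/\sqrt{j_t}$ and $\sum \tau/\sqrt{j} \leq 2\sqrt{\tau T}$); the second group $L_\mu p \sqrt{\tau T}\, u\sqrt{2c_3 g_1(J)^2/\alpha + 2g_2(J)^2}\sqrt{1+\log\lfloor T/\tau\rfloor}$ comes from the TS-inflation term $u\sqrt{2c_3 g_1(j_t)^2/(\alpha j_t)\|x\|^2 + 2g_2(j_t)^2\|x\|^2/j_t}$, where the $\sqrt{1+\log\lfloor T/\tau\rfloor}$ arises because $g_2(j_t)^2 = (\tau/\alpha)^2(1+\log j_t)$ so the square root of the inflation term carries a $\sqrt{1+\log j_t}$ factor that must be bounded and pulled out before applying $\sum \tau/\sqrt{j_t} \leq 2\sqrt{\tau T}$. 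I expect the main obstacle to be the careful bookkeeping in the failure-probability part — in particular establishing $\mathbbm{P}(E_3(j_t)^c)$ is small, i.e., that $\lambda_{\min}(V_{j\tau+1}) \geq \alpha j/c_3$ holds with high probability; this requires a matrix-concentration argument on the pulled-arm second-moment matrices $\sum_s X_s X_s^T$, and crucially uses Assumption \ref{lambda_f} (diversity of optimal arms) together with the fact that after round $\tau$ the TS estimator is close enough to $\theta^*$ that the pulled arms behave like the optimal arms of nearby $\theta$'s — this is exactly where the choice $\alpha = \max\{c_3, d, \log T\}/\lambda_f$ and the $\tau_2$ piece of Equation \ref{tau} are needed, and wiring this together with the conditioning structure (the events are defined with respect to $\mathcal{F}_{j\tau}$ but regret is incurred at rounds $t$ within the block) is the delicate part.
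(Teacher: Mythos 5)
Your proposal is correct in outline and follows essentially the same route as the paper: regret $\le \tau$ on the first $\tau$ rounds, a $O(1/T)$ contribution from the failure of $E_1\cap E_2\cap E_3$ (with $E_3$ handled exactly as you describe, via Proposition~1-style matrix concentration applied blockwise, using that within each block the pulled arms are IID optimal arms for the fixed $\theta^{\text{TS}}_j$ so Assumption~\ref{lambda_f} gives $\lambda_{\min}(\Sigma_{\theta^{\text{TS}}_j})\ge\lambda_f$ — note you do not need $\theta^{\text{TS}}_j$ close to $\theta^*$ for this, since the infimum in Assumption~\ref{lambda_f} is over all $\theta$), then Lemma~\ref{reg_t} followed by summing $H_{a_t}(t)$ with $\|x\|_{V^{-1}}^2\le c_3/(\alpha j_t)$ on $E_3$ and $\sum_{t}\tau^{-1/2} j_t^{-1/2}$-type bounds. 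The one place your bookkeeping diverges from the paper is the provenance of the trailing $\sqrt{1+\log\lfloor T/\tau\rfloor}$: it does not come from the $\sqrt{1+\log j_t}$ inside $g_2$ (that is already absorbed into $g_2(J)$, and pulling it out again would double-count); in the paper it arises from applying Cauchy--Schwarz to the third component of $H$, $\sum_t H_{a_t,3}(t)\le u\sqrt{T}\sqrt{\sum_t(\cdots)/j_t}$, followed by the harmonic-sum bound $\sum_{t=\tau+1}^{T} j_t^{-1}\le \tau(1+\log J)$. Your alternative of bounding $g_1(j_t)\le g_1(J)$, $g_2(j_t)\le g_2(J)$ and summing $j_t^{-1/2}$ directly yields $2u\sqrt{\tau T}\sqrt{2c_3g_1(J)^2/\alpha+2g_2(J)^2}$, which is of the same order but does not literally reproduce the stated inequality when $1+\log J<4$; to match the theorem's exact form you should use the Cauchy--Schwarz step.
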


\begin{remark}
Combining the choices of $\tau, \alpha$ and the definition of $g_1(J), g_2(J)$ in Equation \ref{g1}, \ref{g2}, we have $ \mathbbm{E} [R(T)] \sim \tilde O(\sqrt{T})$. To study the dependence of regret bounds on $d$, we use a common condition in the literature (e.g., \cite{li2017provably}) that $\sigma_0^2 \sim O(1)$ and make a similar assumption that $\lambda_f\sim O(1)$. As pointed out by the reader, this is unrealistic and a more proper assumption should be $\sigma_0^2, \lambda_f \sim O(1/d)$. We will discuss more about the dependencies on $d$ in Section \ref{d} in Appendix. In addition to the $\tilde O(\sqrt{T})$ theoretical guarantee of regret upper bound, our algorithm significantly improves efficiency when either $T$ or $d$ is large for GLB. To the best of our knowledge, it is by far the most efficient algorithm for GLB.
See Table \ref{time_compare} in Section \ref{exps} for the comparisons of time complexity with other algorithms.\footnote{Sherman–Morrison formula improves the time complexity of a matrix inverse in UCB-GLM and GLOC to $O(d^2)$.}
Moreover, the memory cost for UCB-GLM, GLM-TSL, SupCB-GLM and $\epsilon$-greedy algorithms is linear in the total time horizon $T$, which could be very large in practice. For our proposed algorithm SGD-TS, the memory cost is a constant with respect to $T$.
\end{remark}

\begin{figure*}[ht]
\includegraphics[width = 0.33\textwidth]{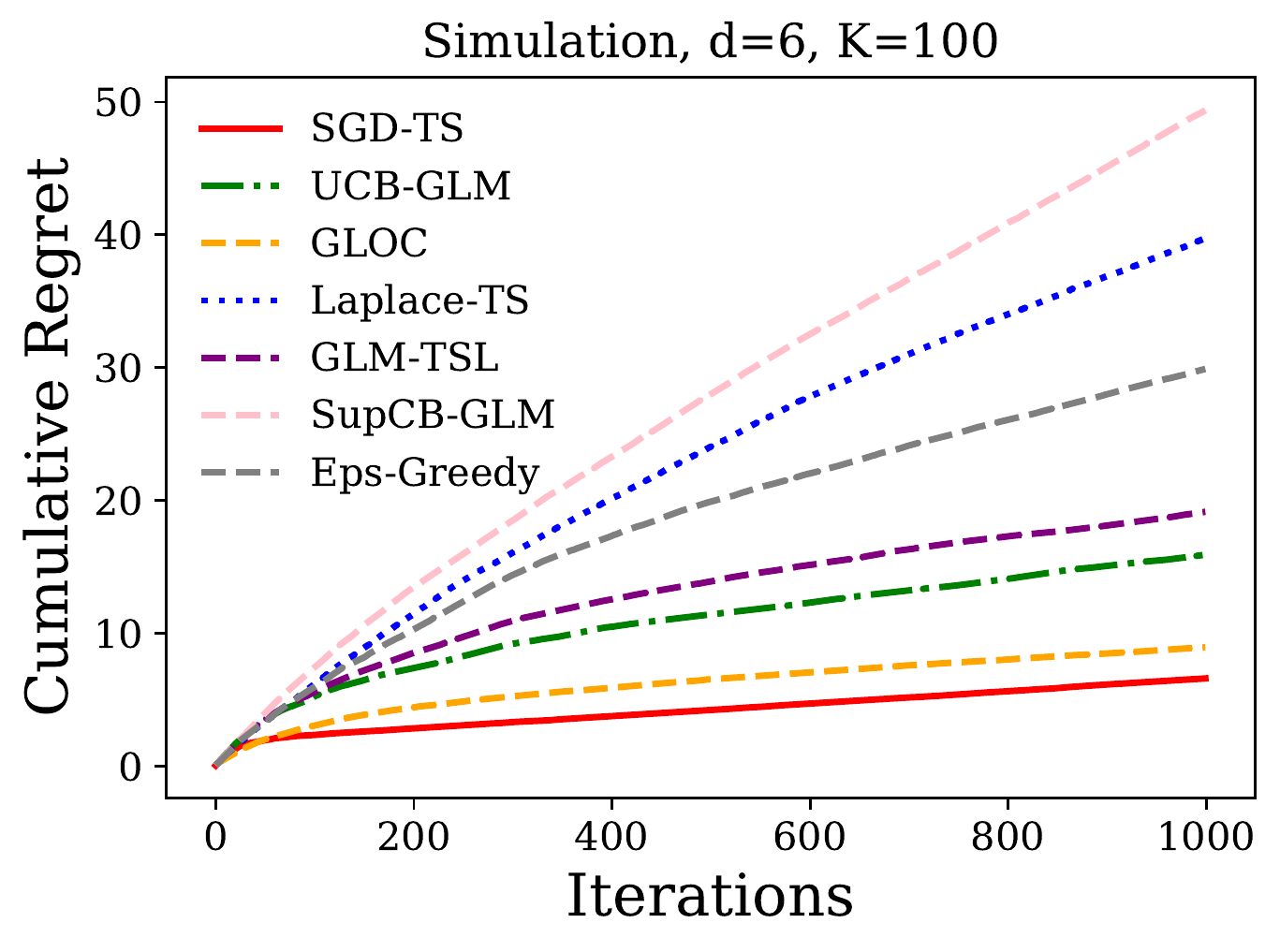}\includegraphics[width = 0.33\textwidth]{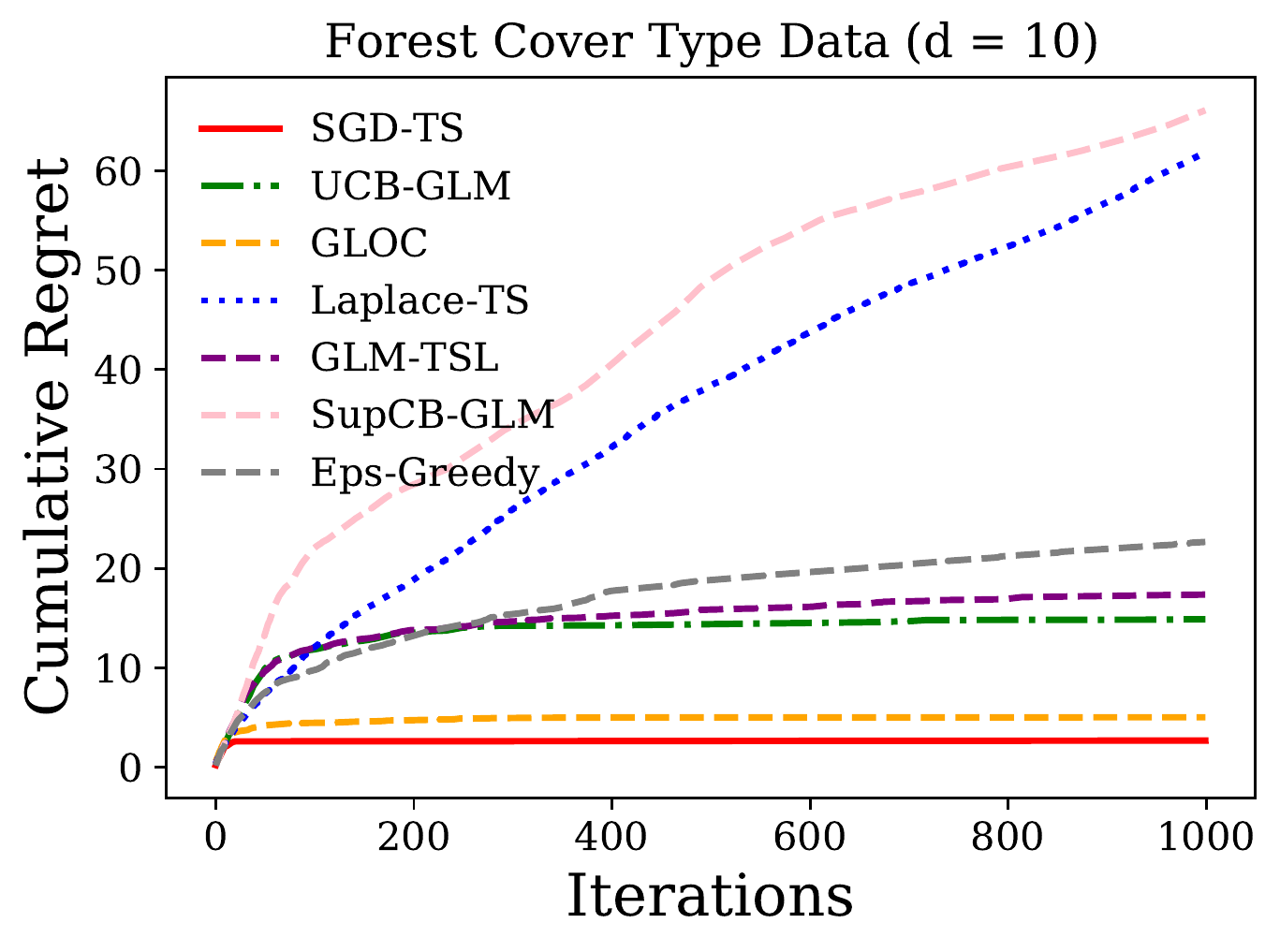}\includegraphics[width = 0.34\textwidth]{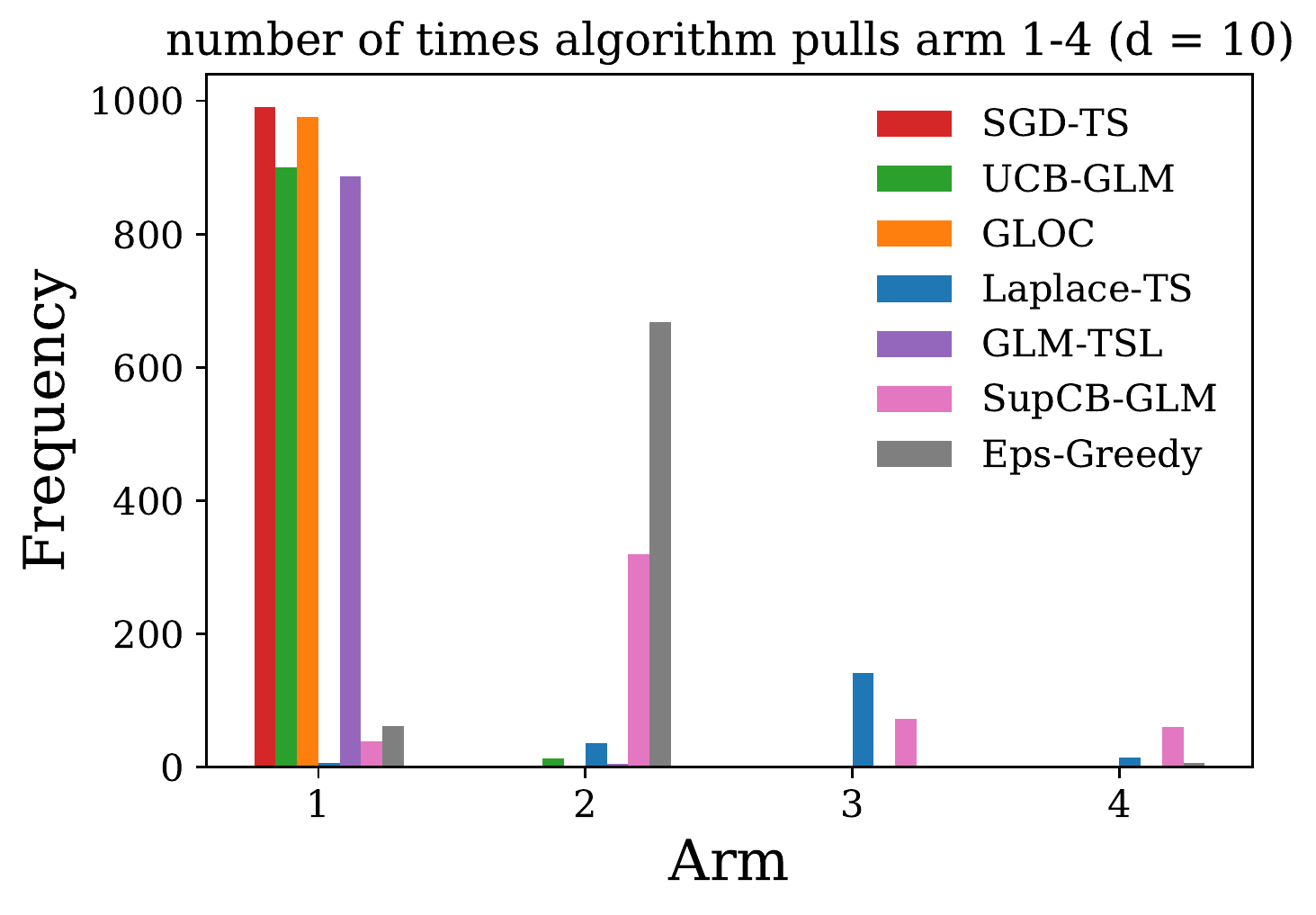}
    
\includegraphics[width = 0.33\textwidth]{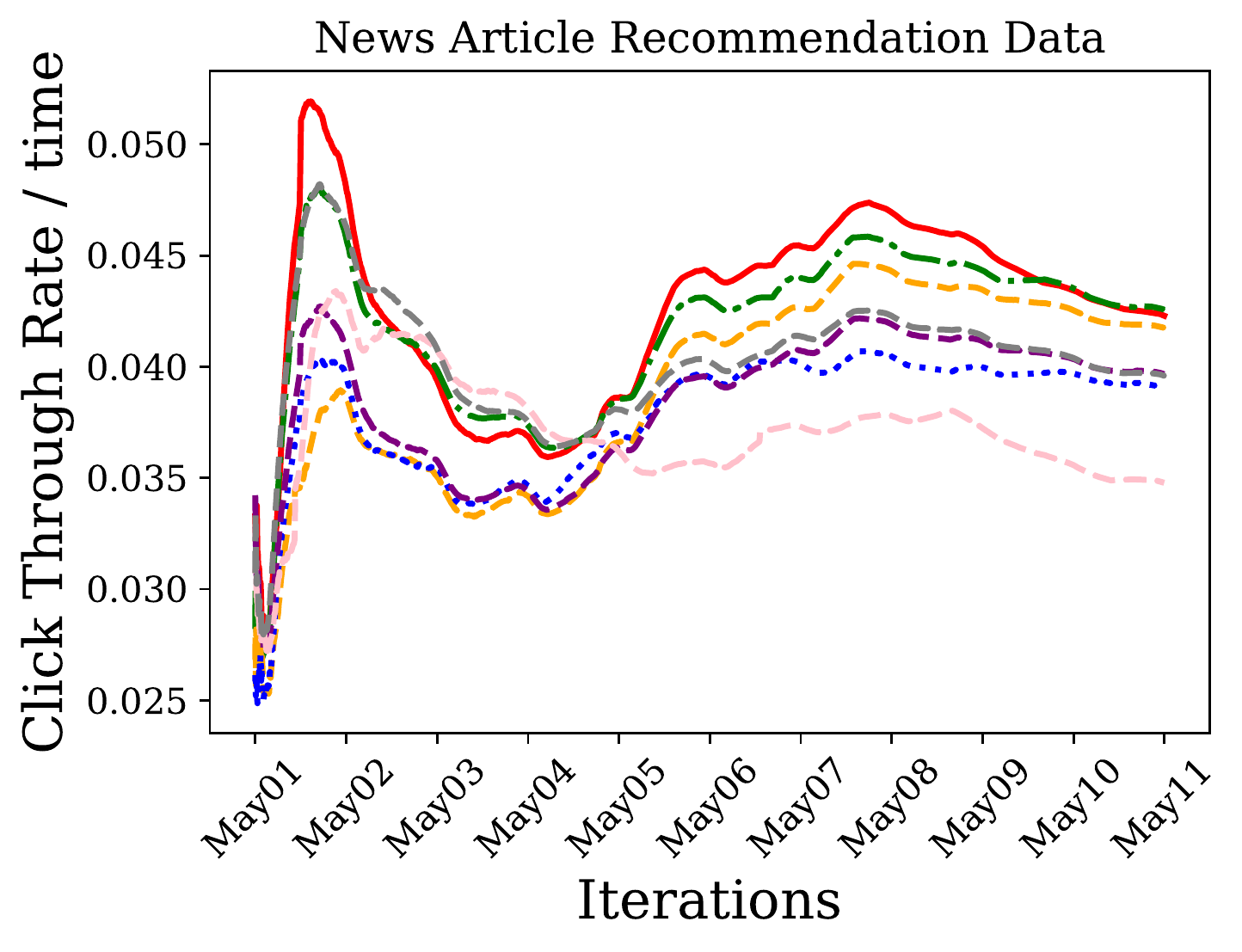}\includegraphics[width = 0.33\textwidth]{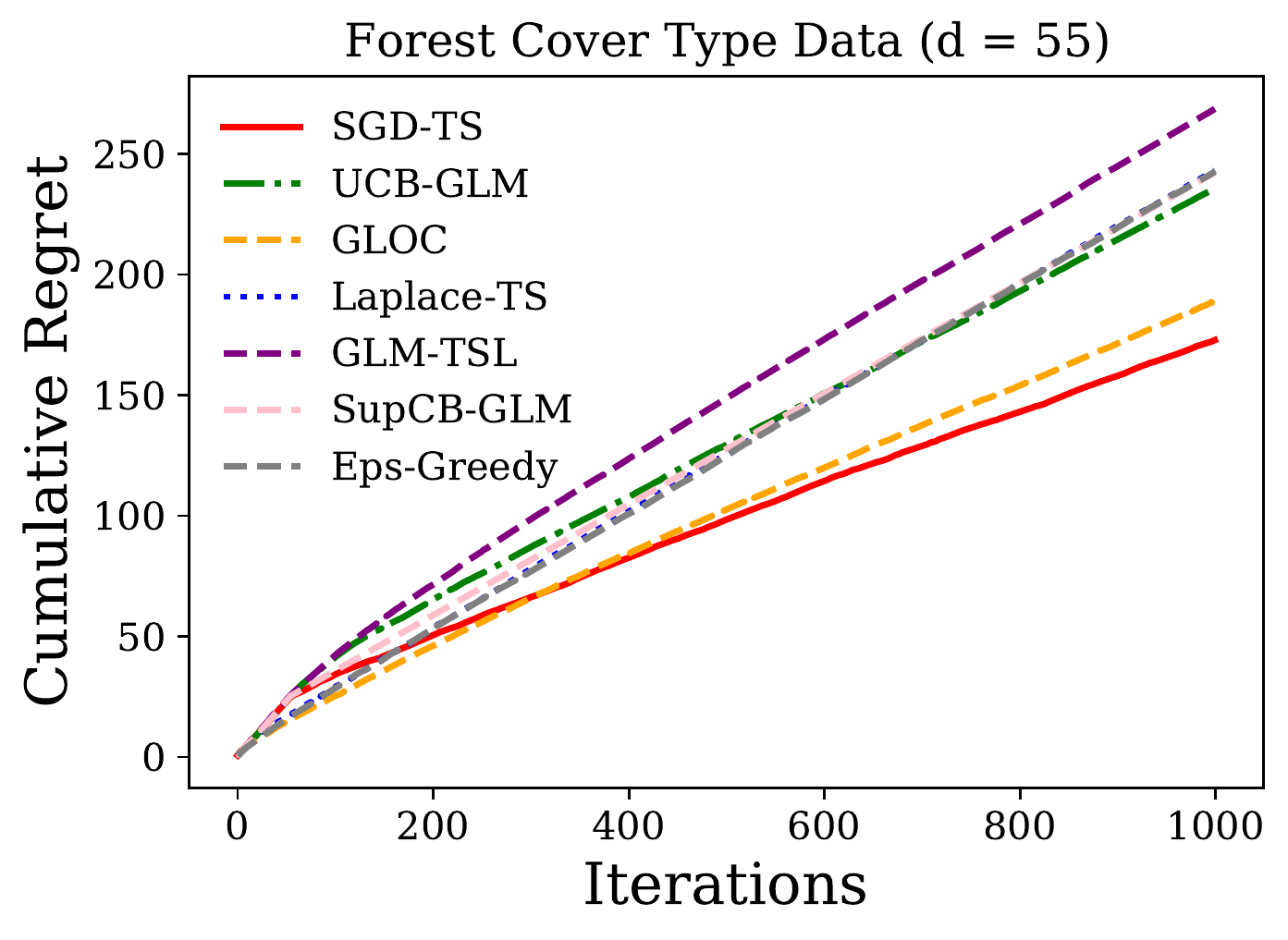}\includegraphics[width = 0.34\textwidth]{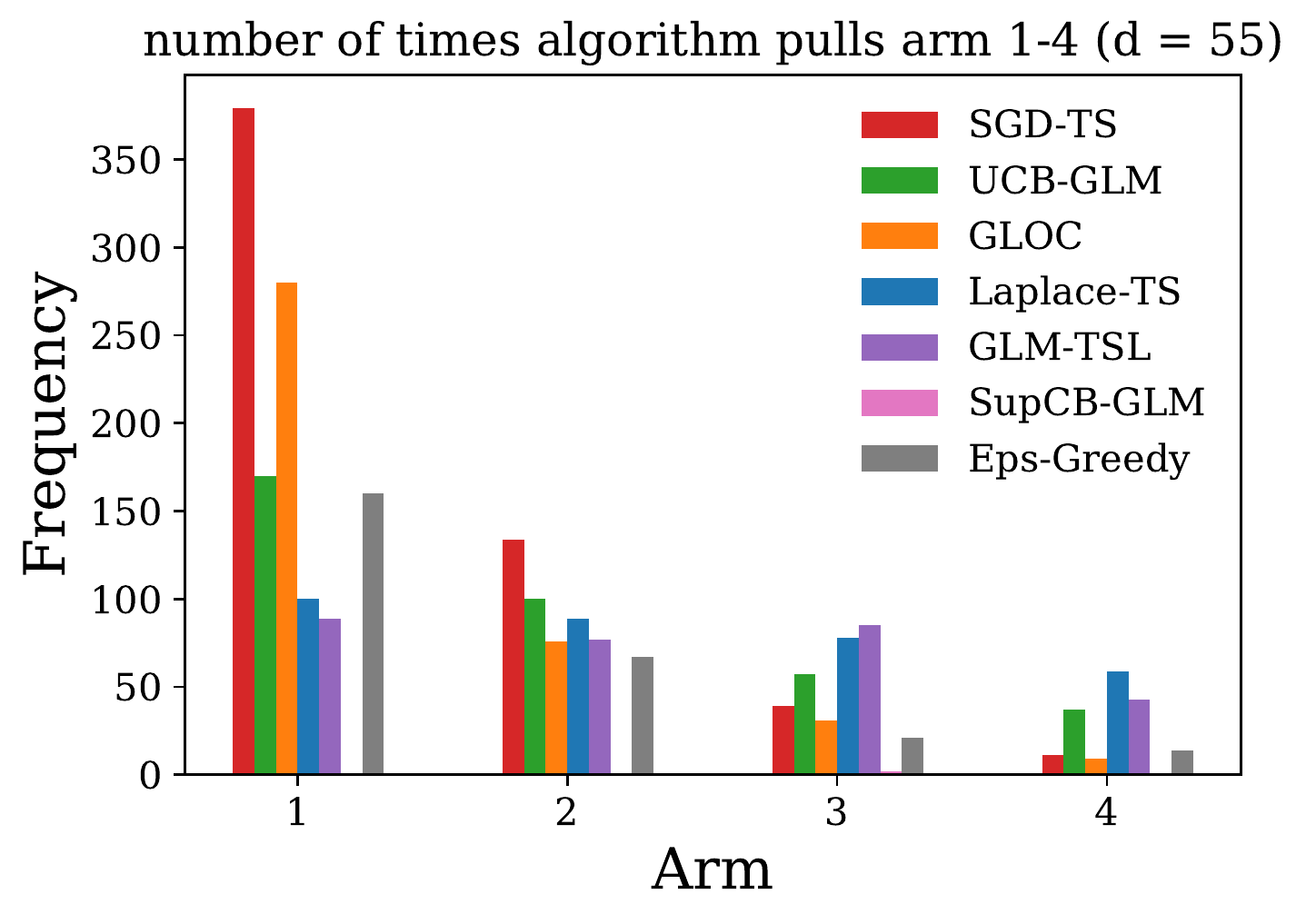}
\vspace{-0.1in}
\caption{For the plots in the first two columns, from left to right, top to bottom, they are for simulation $(d=6, K=100)$, scenario 1 for forest cover type data, news article recommendation data, scenario 2 for forest cover type data respectively. For the plots in the third column, they are the plots of the median frequencies of an algorithm pulls the best 4 arms for scenario 1 and 2 for forest cover type data respectively. (To reduce clutter, the legend in news article recommendation plot is omitted.)}
\label{exp_fig}
\end{figure*}

\section{EXPERIMENTAL RESULTS}\label{exps}
In this section, we show by experiments in both synthetic and real datasets that our proposed SGD-TS algorithm outperforms existing approaches. We compare SGD-TS with UCB-GLM \citep{li2017provably}, 
Laplace-TS \citep{chapelle2011empirical}, GLOC \citep{jun2017scalable}, GLM-TSL \citep{kveton2020randomized}, SupCB-GLM \citep{li2017provably} and $\epsilon$-greedy \citep{auer2002finite,sutton1998introduction}.\footnote{We choose UCB-GLM and GLOC since they have lower theoretical regrets than GLM-UCB and GLOC-TS respectively. We choose GLM-TSL over GLM-FPL since it was shown by \cite{kveton2020randomized} that GLM-TSL enjoys lower regret in practice.}
In order to have a fair comparison, we perform a grid search for the parameters of different algorithms and select the best parameters to report. The covariance matrix in Equation \ref{cov} is set to $A_j = \frac{2a_1^2 + 2a_2^2}{j} I_d$, where $a_1$ and $a_2$ are explorations rates. We do a grid search for exploration rates of SGD-TS, GLOC, GLM-TSL, SupCB-GLM and UCB-GLM in $\{0.01, 0.1, 1, 5, 10\}$. The exploration probability of $\epsilon$-greedy algorithm is set to $\frac{a}{\sqrt{t}}$ at round $t$ and $a$ is also tuned in $\{0.01, 0.1, 1, 5, 10\}$. As suggested by \cite{li2017provably}, $\tau$ should also be treated as a tuning parameter.
For UCB-GLM, GLM-TSL, SupCB-GLM and SGD-TS, we set $\tau = \lfloor C \times \max(\log T, d) \rfloor$ and $C$ is tuned in $\{1,2,\dots,10\}$. The initial step sizes $\eta$ for SGD-TS, GLOC and Laplace-TS are tuned in $\{0.01, 0.05, 0.1, 0.5, 1, 5, 10\}$. In SGD-TS, we set 
$\eta_j = \frac{\eta}{j}$. The experiments are repeated for 10 times and the averaged results are presented.

\subsection{Simulation}
We simulate a dataset with $T=1000$, $K=100$ and $d=6$. The feature vectors and the true model parameter are drawn IID from uniform distribution in the interval of $[-\frac{1}{\sqrt{d}}, \frac{1}{\sqrt{d}}]$. We build a logistic model on the dataset and draw random rewards $Y_t$ from a Bernoulli distribution with mean $\mu(X_t^T \theta^*)$. As suggested by \cite{dumitrascu2018pg}, Laplace approximation of the global optimum does not always converge in non-asymptotic settings. 
\cite{jun2017scalable} points out that SupCB-GLM is an impractical algorithm.
From Figure \ref{exp_fig}, we can see that our proposed SGD-TS performs the best, while SupCB-GLM and Laplace-TS perform poorly as expected.  

\begin{table*}[ht]
\caption{Comparison of time complexity and averaged runtime with other algorithms. The time complexity listed here assumes that $T$ is much bigger than $d$. GLOC and Laplace-TS (only works for logistic bandit) need to solve an optimization problem on one datapoint every round and we assume this optimization problem can be solved in fixed iterations every round.}
\vspace{0.2in}
\label{time_compare}
\centering
\begin{tabular}{cccc}
\toprule
Algorithms                               & Time Complexity          & Simulation    & Yahoo \\
\midrule
UCB-GLM \citep{li2017provably}            & $O(T^2d)$                & 2.024          & 29.643\\
Laplace-TS (only for logistic bandit) \citep{chapelle2011empirical}  & $O(Td)$  & 0.964          & 27.786\\
GLOC \citep{jun2017scalable}              & $O(Td^2)$                & 0.330          & 0.351\\
GLM-TSL \citep{kveton2020randomized}      & $O(T^2d^2)$          & 8.253          & 81.580\\
SupCB-GLM \citep{li2017provably}          & $O(T^2d)$          & 4.609          & 26.842 \\
$\epsilon$-greedy \citep{auer2002finite,sutton1998introduction}   & $O(T^2d)$      & 2.020          & 35.901\\
\textbf{SGD-TS (This work)}              & $\textbf{O(Td)}$                 & \textbf{0.099} & \textbf{0.127} \\
\bottomrule
\end{tabular}
\end{table*}

\subsection{News article recommendation data}
We compare the algorithms on the benchmark Yahoo! Today Module dataset. This dataset contains $45,811,883$ user visits to the news articles website - Yahoo Today Module from May 1, 2009 to May 10, 2009. For each user's visit, the module will select one article from a changing pool of around $20$ articles to present to the user. The user will decide to click (reward $Y_t=1$) or not to click ($Y_t=0$).
Both the users and the articles are associated with a $6$-dimensional feature vector (including a constant feature), constructed by conjoint analysis with a bilinear model \citep{chu2009case}. We treat the articles as arms and discard the users' features. 
The click through rate (CTR) of each article at every round is calculated using the average of recorded rewards at that round.
We still build logistic bandit on this data.
Each time, when the algorithm pulls an article, the observed reward $Y_t$ is simulated from a Bernoulli distribution with mean equal to its CTR. For better visualization, we plot $\frac{1}{t}\sum_{s=1}^t \mathbbm{E}[Y_s]$ against $t$. Since we want higher CTR, the result will be better if $\frac{1}{t}\sum_{s=1}^t \mathbbm{E}[Y_s]$ is bigger. From the plot in Figure \ref{exp_fig}, we can see that SGD-TS performs better than UCB-GLM during May 1 - May 2 and May 5 - May 9. During other days, UCB-GLM and SGD-TS have similar behaviors. However, other algorithms perform poorly in this real application.

\subsection{Forest cover type data}
We compare the algorithms on the Forest Cover Type data from the UCI repository. The dataset contains $581,021$ datapoints from a forest area. The labels represent the main species of the cover type. For each datapoint, if it belongs to the first class (Spruce/Fir species), we set the reward of this datapoint to $1$, otherwise, we set it as $0$. We extract the features (quantitative features are centralized and standardized) from the dataset and then partition the data into $K=32$ clusters (arms). The reward of each cluster is set to the proportion of datapoints having reward equal to $1$ in that cluster. Since the observed reward is either $0$ or $1$, we build logistic bandits for this dataset. Assume arm 1 has the highest reward and arm 4 has the 4-th highest reward.
We plot the averaged cumulative regret and the median frequencies of an algorithm pulls the best $4$ arms for the following two scenarios in Figure \ref{exp_fig}. 

\textbf{Scenario 1:} Similar to \cite{filippi2010parametric}, we use only the $10$ quantitative features and treat the cluster centroid as the feature vector of the cluster. 
The maximum reward of the $32$ arms is around $0.575$ and the minimum is around $0.005$.  

\textbf{Scenario 2:} To make the classification task more challenging, we utilize both categorical and quantitative features, i.e., $d=55$. Meanwhile, the feature vector of each cluster at each round is a random sample from that cluster. This makes the features more dynamic and the algorithm needs to do more exploration before being able to identify the optimal arm.
The maximum reward is around $0.770$ and the minimum is $0$.  

From the plots, we can see that in both scenarios, our proposed algorithm performs the best and it pulls the best arm most frequently. For scenario 1, GLOC, UCB-GLM and GLM-TSL perform relatively well, while the other algorithms are stuck in sub-optimal arms. This is consistent with the results in \cite{dumitrascu2018pg}.  
For the more difficult scenario 2, SGD-TS is still the best algorithm. GLOC performs relatively well, but it is not able to pull the best arm as frequently as SGD-TS. 
All the other algorithms perform poorly and frequently pull sub-optimal arms. 

\subsection{Computational cost}
We present the averaged runtime of each algorithm for the simulation and Yahoo news article recommendation in Table \ref{time_compare}. Presented results are the averaged runtime of one repeated experiment for one parameter combination in the grid search set. Note that all algorithms need to solve an optimization problem or invert a matrix each round except our algorithm. For example, UCB-GLM, GLM-TSL, SupCB-GLM and $\epsilon$-greedy need to find MLE every round. Laplace-TS and GLOC need to solve an optimization problem on one data point every round. UCB-GLM, GLM-TSL, SupCB-GLM and GLOC need to calculate matrix inverse every round.
For our proposed SGD-TS, since we only perform a single-step SGD update every round and do not need to calculate matrix inverse, so the real computational cost is the cheapest.

\section{CONCLUSION AND FUTURE WORK}\label{conclusion}
In this paper, we derive and analyze SGD-TS, a novel and efficient algorithm for generalized linear bandit. The time complexity of SGD-TS scales linearly in both total number of rounds and feature dimensions in general. Under the ``diversity'' assumption, we prove a regret upper bound of order $\tilde O(\sqrt{T})$ for SGD-TS algorithm in generalized linear bandit problems. Experimental results of both synthetic and real datasets show that SGD-TS consistently outperforms other state-of-the-art algorithms. To the best of our knowledge, this is the first attempt that successfully applies online stochastic gradient descent steps to contextual bandit problems with theoretical guarantee. Our proposed algorithm is also the most efficient algorithm for generalized linear bandit so far.

\paragraph{Future work} Although generalized linear bandit is successful in many cases, there are many other models that are more powerful in representation for contextual bandit. This motivates a number of works for contextual bandit with complex reward models \citep{chowdhury2017kernelized,riquelme2018deep,zhou2019neural}. 
For most of these works, finding the posterior or upper confidence bound remains an expensive task in online learning. 
While we have seen in this work that online SGD can be successfully applied to GLB under certain assumptions,
it is interesting to investigate whether we could further use online SGD to design efficient and theoretically solid methods for contextual bandit with more complex reward models, like neural networks, etc.

\section*{Acknowledgements}
This research was partially supported by NSF HDR TRIPODS grant CCF-1934568, NSF IIS-1901527 and NSF IIS-2008173. JS is partially supported by NSF DMS 1712996. QD wants to thank Yi-Wei Liu for his useful comments. The authors also want to thank Kwang-Sung Jun for his helpful discussions.


\bibliographystyle{plainnat}
\bibliography{main}

\onecolumn
\section{SUPPLEMENTARY MATERIAL}
\subsection{Proof of Lemma \ref{close}}
The proof of Lemma \ref{close} is an adaptation from the proof of Theorem 1 in \cite{li2017provably}. 
\begin{proof}
Define $G(\theta) := \sum_{s=1}^t (\mu(X_s^T\theta) - \mu(X_s^T\theta^*) ) X_s$. We have
$G(\theta^*) = 0$ and $G(\hat\theta_t) = \sum_{s=1}^t \epsilon_s X_s$, where $\epsilon_s$ is the sub-Gaussian noise at round $s$. For convenience, define $Z:=G(\hat\theta_t)$. From mean value theorem, for any $\theta_1, \theta_2$, there exists $v\in(0,1)$ and $\bar\theta = v\theta_1 + (1-v) \theta_2$ such that
\begin{equation}\label{gtheta}
    G(\theta_1) - G(\theta_2) = \left[ \sum_{s=1}^t \mu^\prime(X_s^T \bar\theta)  X_s X_s^T \right] (\theta_1 - \theta_2) := F(\bar\theta) (\theta_1 - \theta_2),
\end{equation}
where $F(\bar\theta) = \sum_{s=1}^t \mu^\prime(X_s^T \bar\theta)  X_s X_s^T$.
Therefore, for any $\theta_1\neq \theta_2$, we have
\begin{equation*}
    (\theta_1-\theta_2)^T (G(\theta_1) - G(\theta_2)) = (\theta_1-\theta_2)^T F(\bar\theta) (\theta_1-\theta_2) > 0,
\end{equation*}
since $\mu^\prime > 0$ and $\lambda_{\min}(V_{t+1})>0$. So $G(\theta)$ is an injection from $\mathbbm{R}^d$ to $\mathbbm{R}^d$. 
Consider an $\eta$-neighborhood of $\theta^*$, $\mathbbm{B}_{\eta} := \{ \theta: \|\theta-\theta^*\| \leq \eta\}$, where $\eta$ is a constant that will be specified later such that we have $c_{\eta} = \inf_{\theta\in \mathbbm{B}_{\eta}} \mu^\prime(x^T\theta) > 0$.
When $\theta_1, \theta_2 \in \mathbbm{B}_{\eta}$, from the property of convex set, we have $\bar\theta \in \mathbbm{B}_\eta$.
From Equation \ref{gtheta}, we have when $\theta \in \mathbbm{B}_{\eta}$,
\begin{align*}
   \|G(\theta)\|_{V_{t+1}^{-1}} & = \|G(\theta) - G(\theta^*)\|_{V_{t+1}^{-1}} = \sqrt{ (\theta - \theta^*)^T F(\bar\theta) V_{t+1}^{-1} F(\bar\theta) (\theta - \theta^*)}  \\
   & \geq c_{\eta} \sqrt{\lambda_{\min}(V_{t+1})}  \|\theta - \theta^*\|
\end{align*}
The last inequality is due to 
\begin{equation*}
    F(\bar\theta) \succeq c_{\eta} \sum_{s=1}^t  X_s X_s^T = c_{\eta} V_{t+1}.
\end{equation*}

From Lemma A in \cite{chen1999strong}, we have that 
\begin{equation*}
    \left\{ \theta: \|G(\theta) - G(\theta^*)\|_{V_{t+1}^{-1}} \leq c_{\eta} \eta \sqrt{\lambda_{\min} (V_{t+1})} \right \} \subset \mathbbm{B}_{\eta}.
\end{equation*}
Now 
from Lemma 7 in \cite{li2017provably}, we have with probability at least $1-\delta$,
\begin{equation*}
    \|G(\hat\theta_t) - G(\theta^*)\|_{V_{t+1}^{-1}} = \|Z\|_{V_{t+1}^{-1}} \leq 4R \sqrt{d + \log \frac{1}{\delta} }.
\end{equation*}

Therefore, when 
\begin{equation*}
    \eta \geq \frac{4R}{c_{\eta}} \sqrt{ \frac{d + \log \frac{1}{\delta}}{\lambda_{\min} (V_{t+1}) }},
\end{equation*}
we have $\hat\theta_t \in \mathbbm{B}_{\eta}$.
Since $c_{\eta} \geq c_1 \geq c_3 > 0$ when $\eta\leq 1$, we have
\begin{equation*}
    \|\hat\theta_t - \theta^*\| \leq \frac{4R}{c_{\eta}} \sqrt{ \frac{d + \log \frac{1}{\delta}}{\lambda_{\min} (V_{t+1}) }} \leq 1,
\end{equation*}
when $\lambda_{\min} (V_{t+1}) \geq \frac{16 R^2 [d+\log (\frac{1}{\delta})]}{c_1^2}.$
\end{proof}

\subsection{Proof of Lemma \ref{convergence_sgd}}
Note that the condition of Lemma \ref{close} 
holds with high probability when $\tau$ is chosen as Equation \ref{tau}. This is a consequence of Proposition 1 in \cite{li2017provably}, which is presented below for reader's convenience.
\begin{proposition}
[Proposition 1 in \cite{li2017provably}]
\label{prop}
Define $V_{n+1} = \sum_{t=1}^n X_tX_t^T$, where $X_t$ is drawn IID from some distribution in unit ball $\mathbbm{B}^d$. Furthermore, let $\Sigma := E[X_t X_t^T]$ be the second moment matrix, let $B, \delta_2>0$ be two positive constants. Then there exists positive, universal constants $C_1$ and $C_2$ such that $\lambda_{\min} (V_{n+1}) \geq B$ with probability at least $1-\delta_2$, as long as
\begin{equation*}\label{condition_for_tau}
    n \geq \left( \frac{C_1 \sqrt{d} + C_2 \sqrt{\log(1/\delta_2)}}{\lambda_{\min} (\Sigma)} \right)^2 + \frac{2B}{\lambda_{\min} (\Sigma)}.
\end{equation*}
\end{proposition}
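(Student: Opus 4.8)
The plan is to recognize $V_{n+1} = \sum_{t=1}^n X_t X_t^T$ as a sum of independent, identically distributed, positive-semidefinite random matrices and to control its smallest eigenvalue directly with a matrix Chernoff bound, avoiding any appeal to Weyl's inequality or to explicit variance computations. First I would record the two structural facts the argument needs: each summand satisfies $\lambda_{\max}(X_t X_t^T) = \|X_t\|^2 \leq 1$ because $X_t$ lies in the unit ball $\mathbbm{B}^d$, and $\mathbbm{E}[V_{n+1}] = n\Sigma$, so that $\mu_{\min} := \lambda_{\min}(\mathbbm{E}[V_{n+1}]) = n\,\lambda_{\min}(\Sigma)$.

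Next I would invoke the lower-tail matrix Chernoff inequality (Tropp): for independent positive-semidefinite matrices whose maximum eigenvalue is bounded by $R$, with $\mu_{\min}$ as above, every $\epsilon \in [0,1)$ satisfies $\mathbbm{P}\left(\lambda_{\min}(V_{n+1}) \leq (1-\epsilon)\mu_{\min}\right) \leq d\exp\left(-\epsilon^2\mu_{\min}/(2R)\right)$, where here $R = 1$. I would then set $\epsilon = 1 - B/\mu_{\min}$ (legitimate once $\mu_{\min} > B$), so that $(1-\epsilon)\mu_{\min} = B$ and the event $\{\lambda_{\min}(V_{n+1}) \leq B\}$ has probability at most $d\exp(-\epsilon^2\mu_{\min}/2)$. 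Forcing this bound to be at most $\delta_2$ reduces the entire statement to the scalar requirement $\epsilon^2\mu_{\min} \geq 2\log(d/\delta_2)$, i.e. $\sqrt{\mu_{\min}} - B/\sqrt{\mu_{\min}} \geq \sqrt{2\log(d/\delta_2)}$.

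The remaining work is to turn this scalar inequality into the sample-size condition as stated. Writing $s = \sqrt{\mu_{\min}} = \sqrt{n\,\lambda_{\min}(\Sigma)}$ and $g = \sqrt{2\log(d/\delta_2)}$, the requirement is $s^2 - g s - B \geq 0$, which holds as soon as $s \geq (g + \sqrt{g^2 + 4B})/2$; squaring and using $\sqrt{g^2+4B} \leq g + 2\sqrt{B}$ together with $g\sqrt{B} \leq (g^2+B)/2$ gives the clean sufficient condition $\mu_{\min} = n\,\lambda_{\min}(\Sigma) \geq \tfrac{3}{2}g^2 + \tfrac{3}{2}B = 3\log(d/\delta_2) + \tfrac{3}{2}B$. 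Finally I would verify that the stated hypothesis implies this. Since $\Sigma = \mathbbm{E}[X_tX_t^T]$ obeys $\lambda_{\min}(\Sigma) \leq \mathrm{tr}(\Sigma)/d = \mathbbm{E}\|X_t\|^2/d \leq 1$, we have $1/\lambda_{\min}(\Sigma) \leq 1/\lambda_{\min}(\Sigma)^2$; combined with $\log d \leq d$ this yields $3\log(d/\delta_2)/\lambda_{\min}(\Sigma) \leq 3(d + \log(1/\delta_2))/\lambda_{\min}(\Sigma)^2 \leq (C_1\sqrt{d} + C_2\sqrt{\log(1/\delta_2)})^2/\lambda_{\min}(\Sigma)^2$ for suitable universal $C_1, C_2$, while $\tfrac{3}{2}B/\lambda_{\min}(\Sigma) \leq 2B/\lambda_{\min}(\Sigma)$. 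Hence the displayed lower bound on $n$ forces $\mu_{\min} \geq 3\log(d/\delta_2) + \tfrac{3}{2}B$, so $\lambda_{\min}(V_{n+1}) \geq B$ with probability at least $1-\delta_2$.

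I do not anticipate a genuine obstacle here, as this is a standard matrix-concentration argument; the only delicate point is cosmetic, namely matching the conservative $\sqrt{d}$ dependence and the exact $(\cdots)^2/\lambda_{\min}(\Sigma)^2 + 2B/\lambda_{\min}(\Sigma)$ shape of the stated condition. That matching is where care is needed: one must absorb the numerical factor $3$ into the universal constants $C_1, C_2$ and exploit $\lambda_{\min}(\Sigma) \leq 1$ and $\log d \leq d$ to confirm that the (in fact tighter) matrix-Chernoff requirement is implied by the more conservative hypothesis as worded.
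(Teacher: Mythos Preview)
The paper does not prove this proposition; it is quoted from \cite{li2017provably} and presented ``for reader's convenience'' without any argument, so there is no in-paper proof to compare against.

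Your matrix-Chernoff route is correct and self-contained. The lower-tail bound $\mathbbm{P}(\lambda_{\min}(V_{n+1}) \leq (1-\epsilon)\mu_{\min}) \leq d\exp(-\epsilon^2\mu_{\min}/2)$ applies because each $X_tX_t^T$ is PSD with top eigenvalue at most $1$; the reduction via $\epsilon = 1 - B/\mu_{\min}$ to the scalar quadratic $s^2 - gs - B \geq 0$ in $s = \sqrt{n\lambda_{\min}(\Sigma)}$ is clean, and the chain of elementary inequalities leading to the sufficient condition $n\lambda_{\min}(\Sigma) \geq 3\log(d/\delta_2) + \tfrac{3}{2}B$ is sound. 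The closing step---using $\lambda_{\min}(\Sigma) \leq \mathrm{tr}(\Sigma)/d \leq 1$ and $\log d \leq d$ to verify that the stated hypothesis dominates this requirement---works with, for instance, $C_1 = C_2 = \sqrt{3}$.

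For context, the shape of the stated bound (the $(C_1\sqrt d + C_2\sqrt{\log(1/\delta_2)})^2/\lambda_{\min}(\Sigma)^2$ term) is characteristic of the alternative approach you deliberately avoided: bound the operator norm $\|n^{-1}V_{n+1} - \Sigma\|$ by a covariance-estimation inequality of order $\sqrt{d/n} + \sqrt{\log(1/\delta_2)/n}$ and then apply Weyl. Your matrix-Chernoff argument is in fact sharper---it gives a requirement linear in $1/\lambda_{\min}(\Sigma)$ and logarithmic in $d$---and the last paragraph of your proposal merely confirms that this sharper condition is implied by the looser one as worded. That is a legitimate way to finish; just be aware that the ``care'' you flag is purely about packaging constants, not about any mathematical gap.
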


Now we formally prove Lemma \ref{convergence_sgd}.
\begin{proof}
Note that from the definition of $\tilde\theta_0$ in the algorithm, when $j=1$, the conclusion holds trivially.
When $\tau$ is chosen as in Equation \ref{tau}, we have from Lemma 1 and Proposition \ref{prop} that $\|\hat\theta_t - \theta^*\| \leq 1$ for all $t\geq \tau$ with probability at least $1-\frac{2}{T^2}$. Therefore, $\hat\theta_{j\tau} \in \mathcal{C}$ for all $j\geq 1$ with probability at least $1-\frac{2}{T^2}$. 
For the analysis below, we assume $\hat\theta_{j\tau} \in \mathcal{C}$ for all $j\geq 1$.

Since $\tilde\theta_j \in \mathcal{C}$, we have $\| \tilde\theta_j - \theta^*\| \leq 3$. Denote $\mathbbm{B}_{\eta} := \{ \theta: \|\theta-\theta^*\| \leq \eta\}$, we have $\tilde\theta_j, \hat\theta_{j\tau} \in \mathbbm{B}_3$. 
For any $v>0$, define $\bar\theta = v\tilde\theta_j + (1-v)\hat\theta_{j\tau}$, since $\mathbbm{B}_3$ is convex, we have $\bar\theta \in \mathbbm{B}_3$. Therefore, we have from Assumption \ref{mu_prime} 
\begin{equation*}
    \nabla^2 l_{j,\tau} (\bar\theta) = \sum_{s=(j-1)\tau + 1}^{j\tau} \mu^\prime (X_s^T \bar\theta) X_s X_s^T \succeq c_3 \sum_{s=(j-1)\tau + 1}^{j\tau} X_s X_s^T.
\end{equation*}
Since we update $\tilde\theta_j$ every $\tau$ rounds and $\theta^{\text{TS}}_j$ only depends on $\tilde\theta_{j}$. For the next $\tau$ rounds, the pulled arms are only dependent on $\theta^{\text{TS}}_j$. 
Therefore, the feature vectors of pulled arms among the next $\tau$ rounds are IID. 
According to Proposition \ref{prop} and Equation \ref{tau}, and by applying a union bound, we have $\lambda_{\min} \left(\sum_{s=(j-1)\tau + 1}^{j\tau} X_s X_s^T \right) \geq \frac{\alpha}{c_3}$ holds for all $j\geq 1$ with probability at least $1-\frac{1}{T^2}$. This tells us that for all $j$, $l_{j,\tau} (\theta)$ is a $\alpha$-strongly convex function when $\theta \in \mathbbm{B}_3$. Therefore, we can apply (Theorem 3.3 of Section 3.3.1 in \cite{hazan2016introduction}) to get for all $j\geq 1$
\begin{equation*}
    \sum_{q=1}^j \left( l_{q,\tau} (\tilde\theta_q) - l_{q,\tau} (\hat\theta_{j\tau})  \right) \leq \frac{G^2}{2\alpha} (1+\log j)
\end{equation*}
where $G$ satisfies $G^2 \geq E\|\nabla l_{q,\tau}\|^2$. Note that $G \leq \tau$ since $\mu(x) \in [0,1], Y_s \in [0,1]$ and $\|X_s\| \leq 1$. From Jensen's Inequality,
we have 
\begin{equation*}
    \sum_{q=1}^j \left( l_{q,\tau} (\bar\theta_j)- l_{q,\tau} (\hat\theta_{j\tau}) \right) \leq \frac{G^2}{2\alpha} (1+\log j).
\end{equation*}
Since $\bar\theta_j, \hat\theta_{j\tau} \in \mathbbm{B}_3$, we have for any $v>0$, if $\theta = v\bar\theta_j + (1-v)\hat\theta_{j\tau}$, then $\nabla^2 l_{q,\tau}(\theta) \succeq \alpha I_d$ for all $1\leq q\leq j$. Since $\sum_{q=1}^j \nabla l_{q,\tau} (\hat\theta_{j\tau}) = 0$,
we have
\begin{equation*}
    \| \bar\theta_j - \hat\theta_{j\tau} \| \leq \frac{G}{\alpha} \sqrt{\frac{1+\log j}{j}}.
\end{equation*}
By applying a union bound, we get the conclusion.
\end{proof}

\subsection{Proof of Lemma \ref{concentrate_e1}}
We utilize the concentration property of MLE. Here, we present the analysis of MLE in \cite{li2017provably}.
\begin{lem} 
[Lemma 3 in \cite{li2017provably}]
\label{hat_theta_concentrate}
Suppose $\lambda_{\min} (V_{\tau+1}) \geq 1$. For any $\delta_3 \in (0,1)$, the following event
\begin{equation*}
    \mathcal{E} := \left\{ \|\hat\theta_t -\theta^* \|_{V_{t+1}} \leq \frac{R}{c_1} \sqrt{\frac{d}{2} \log(1+\frac{2t}{d}) + \log \frac{1}{\delta_3} }   \right\}
\end{equation*}
holds for all $t\geq \tau$ with probability at least $1-\delta_3$.
\end{lem}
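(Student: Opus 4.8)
The plan is to reduce the weighted-norm deviation $\|\hat\theta_t - \theta^*\|_{V_{t+1}}$ to a self-normalized martingale tail, reusing the machinery already set up in the proof of Lemma \ref{close}. First I would use the first-order optimality condition defining the MLE, $\sum_{s=1}^t (Y_s - \mu(X_s^T\hat\theta_t))X_s = 0$, and substitute the GLM relation $Y_s = \mu(X_s^T\theta^*) + \epsilon_s$. Writing $G(\theta) := \sum_{s=1}^t (\mu(X_s^T\theta) - \mu(X_s^T\theta^*))X_s$ exactly as in Lemma \ref{close}, this yields the clean identity $G(\hat\theta_t) = Z$, where $Z := \sum_{s=1}^t \epsilon_s X_s$ and $G(\theta^*) = 0$. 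All the GLM nonlinearity is now packaged into $G$, and the noise into $Z$.

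Next I would linearize $G$ by the mean value theorem precisely as in Equation \ref{gtheta}: there is $\bar\theta$ on the segment joining $\hat\theta_t$ and $\theta^*$ with $Z = G(\hat\theta_t) - G(\theta^*) = F(\bar\theta)(\hat\theta_t - \theta^*)$, where $F(\bar\theta) = \sum_{s=1}^t \mu^\prime(X_s^T\bar\theta)X_sX_s^T$. Provided $\bar\theta \in \mathbbm{B}_1$, Assumption \ref{mu_prime} gives $F(\bar\theta) \succeq c_1 V_{t+1}$. The key linear-algebra step is that $F(\bar\theta) \succeq c_1 V_{t+1}$ forces $F(\bar\theta) V_{t+1}^{-1} F(\bar\theta) \succeq c_1^2 V_{t+1}$ (substitute $W = V_{t+1}^{-1/2} F(\bar\theta) V_{t+1}^{-1/2} \succeq c_1 I$, so $W^2 \succeq c_1^2 I$). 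Applying this to $Z = F(\bar\theta)(\hat\theta_t-\theta^*)$ gives
\begin{equation*}
\|Z\|_{V_{t+1}^{-1}}^2 = (\hat\theta_t-\theta^*)^T F(\bar\theta) V_{t+1}^{-1} F(\bar\theta) (\hat\theta_t-\theta^*) \geq c_1^2 \|\hat\theta_t - \theta^*\|_{V_{t+1}}^2,
\end{equation*}
hence $\|\hat\theta_t - \theta^*\|_{V_{t+1}} \leq \frac{1}{c_1}\|Z\|_{V_{t+1}^{-1}}$. The entire problem is now reduced to controlling the self-normalized quantity $\|Z\|_{V_{t+1}^{-1}}$.

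Then I would bound $\|Z\|_{V_{t+1}^{-1}}$ by a time-uniform self-normalized concentration inequality of the Abbasi-Yadkori–P\'al–Szepesv\'ari type. Since each $\epsilon_s$ is $R$-sub-Gaussian conditional on $\mathcal{F}_{s-1}$ and $X_s$ is $\mathcal{F}_{s-1}$-measurable, the method-of-mixtures argument yields a bound that holds \emph{simultaneously for all} $t$ with probability at least $1-\delta_3$; the hypothesis $\lambda_{\min}(V_{\tau+1}) \geq 1$ (whence $V_{t+1} \succeq I$ for all $t \geq \tau$) supplies the normalization for the baseline determinant term. Combining this with the determinant--trace inequality, using $\|X_s\| \leq 1$ and $\mathrm{tr}(V_{t+1}) \leq t$, the routine bookkeeping gives
\begin{equation*}
\|Z\|_{V_{t+1}^{-1}}^2 \leq R^2\Big(\tfrac{d}{2}\log\big(1 + \tfrac{2t}{d}\big) + \log\tfrac{1}{\delta_3}\Big),
\end{equation*}
which, divided by $c_1$ and square-rooted, reproduces exactly the stated event $\mathcal{E}$. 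I would stress that the uniformity in $t$ of the self-normalized bound is what delivers the ``for all $t \geq \tau$'' conclusion at a single confidence level $\delta_3$, with no additional union bound over $t$.

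The step I expect to be the main obstacle is the localization that licenses $F(\bar\theta) \succeq c_1 V_{t+1}$, i.e. guaranteeing $\bar\theta \in \mathbbm{B}_1$, equivalently $\|\hat\theta_t - \theta^*\| \leq 1$. This is delicate because it is circular: showing $\hat\theta_t$ is close requires a concentration bound, yet the sharp bound above presupposes that closeness. I would break the circle exactly as Lemma \ref{close} does, via the injectivity of $G$ together with the containment lemma (Lemma A of \cite{chen1999strong}), fed by a cruder but unconditional tail bound on $\|Z\|_{V_{t+1}^{-1}}$, which forces $\hat\theta_t \in \mathbbm{B}_1$ once $\lambda_{\min}(V_{t+1})$ is sufficiently large. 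Under the choice of $\tau$ in Equation \ref{tau}, Proposition \ref{prop} guarantees this eigenvalue condition for every $t \geq \tau$, so the neighborhood event and the self-normalized event hold jointly and the stated high-probability bound follows.
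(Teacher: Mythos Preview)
The paper does not actually prove this lemma: it is stated verbatim as ``Lemma 3 in \cite{li2017provably}'' and invoked as a black box (the surrounding \texttt{proof} environment in the supplementary material is the proof of Lemma~\ref{concentrate_e1}, not of Lemma~\ref{hat_theta_concentrate}). Your sketch is essentially a reconstruction of the argument in \cite{li2017provably}: reduce $\|\hat\theta_t-\theta^*\|_{V_{t+1}}$ to $\frac{1}{c_1}\|Z\|_{V_{t+1}^{-1}}$ via the same $G$/$F$ machinery as in Lemma~\ref{close}, and then control $\|Z\|_{V_{t+1}^{-1}}$ uniformly in $t$ by the Abbasi-Yadkori--P\'al--Szepesv\'ari self-normalized bound together with the determinant--trace inequality. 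That is exactly the route taken in the cited reference, so your proposal matches the intended proof.

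One caveat worth flagging: the hypothesis of the lemma as stated is only $\lambda_{\min}(V_{\tau+1})\geq 1$, which by itself is too weak to force $\hat\theta_t\in\mathbbm{B}_1$ and hence to justify $F(\bar\theta)\succeq c_1 V_{t+1}$. You correctly identify this circularity and resolve it by appealing to the stronger eigenvalue growth guaranteed by Proposition~\ref{prop} under the choice of $\tau$ in Equation~\ref{tau}, which is how the present paper (and \cite{li2017provably}) actually deploys the lemma; just be aware that this localization is an \emph{additional} high-probability event beyond the stated hypothesis, so strictly speaking the lemma as written is being used only in the regime where Lemma~\ref{close} already applies.
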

\begin{proof}
Note that from Proposition \ref{prop}, when $\alpha \geq c_3$,
$\lambda_{\min} (V_{\tau+1}) \geq 1$ holds with probability at least $1-\frac{1}{T^2}$.
The proof of Lemma \ref{concentrate_e1} is simply a combination of Lemma \ref{convergence_sgd} and Lemma \ref{hat_theta_concentrate} by applying a union bound.
\end{proof}

\subsection{Proof of Lemma \ref{ts_concentrate}}
We use formula 7.1.13 in \cite{abramowitz1948handbook} to help derive the concentration and anti-concentration inequalities for Gaussian distributed random variables. Details are shown in Lemma \ref{normal_ineq}.
\begin{lem}[Formula 7.1.13 in \cite{abramowitz1948handbook}]
\label{normal_ineq}
For a Gaussian distributed random variable with mean $m$ and variance $\sigma^2$, we have for $z\geq 1$ that
\begin{equation*}
    \mathbbm{P}( |Z-m| \geq z\sigma)  \leq  \frac{1}{\sqrt{\pi} } e^{-\frac{z^2}{2}}.
\end{equation*}
For $0<z\leq 1$, we have
\begin{equation*}
   \mathbbm{P}( |Z-m| \geq z\sigma)  \geq  \frac{1}{2\sqrt{\pi} } e^{-\frac{z^2}{2}} .
\end{equation*}
\end{lem}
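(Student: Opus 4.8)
The plan is to reduce the statement to the standard normal and then to the classical two–sided bounds on the Gaussian tail. First I standardize: $W := (Z-m)/\sigma$ is standard normal, so $\mathbbm{P}(|Z-m| \ge z\sigma) = \mathbbm{P}(|W| \ge z) = 2\bar\Phi(z)$, where $\bar\Phi(z) := \frac{1}{\sqrt{2\pi}}\int_z^\infty e^{-t^2/2}\,dt$. The change of variable $t = \sqrt 2\,u$ gives $\bar\Phi(z) = \frac{1}{\sqrt\pi}\int_{z/\sqrt 2}^\infty e^{-u^2}\,du = \frac{1}{\sqrt\pi}\,e^{-z^2/2}\,g(z/\sqrt 2)$, where $g(x) := e^{x^2}\int_x^\infty e^{-u^2}\,du = \frac{\sqrt\pi}{2}e^{x^2}\mathrm{erfc}(x)$. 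Hence $\mathbbm{P}(|Z-m|\ge z\sigma) = \frac{2}{\sqrt\pi}e^{-z^2/2}g(z/\sqrt 2)$, and the two claimed inequalities are \emph{exactly} $g(z/\sqrt 2)\le\frac12$ for $z\ge1$ and $g(z/\sqrt 2)\ge\frac14$ for $0<z\le1$.

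Next I would invoke the classical sandwich (this is precisely A\&S~7.1.13): for every $x\ge 0$,
\begin{equation*}
\frac{1}{x+\sqrt{x^2+2}} \;\le\; g(x) \;\le\; \frac{1}{x+\sqrt{x^2+4/\pi}} .
\end{equation*}
Now substitute $x=z/\sqrt 2$ and use that $x\mapsto x+\sqrt{x^2+c}$ is increasing. For $z\ge1$ we have $x\ge 1/\sqrt2$, so $x+\sqrt{x^2+4/\pi}\ge \frac{1}{\sqrt2}+\sqrt{\frac12+\frac4\pi}>2$, whence $g(z/\sqrt2)\le\frac{1}{x+\sqrt{x^2+4/\pi}}\le\frac12$, giving $\mathbbm{P}(|Z-m|\ge z\sigma)\le\frac{1}{\sqrt\pi}e^{-z^2/2}$. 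For $0<z\le1$ we have $0<x\le 1/\sqrt2$, so $x+\sqrt{x^2+2}\le\frac{1}{\sqrt2}+\sqrt{\frac52}<4$, whence $g(z/\sqrt2)\ge\frac{1}{x+\sqrt{x^2+2}}\ge\frac14$, giving $\mathbbm{P}(|Z-m|\ge z\sigma)\ge\frac{1}{2\sqrt\pi}e^{-z^2/2}$.

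Since the only external ingredient is A\&S~7.1.13, which is a classical special-function estimate, I would either cite it or sketch its elementary proof (e.g. comparing $g$, which solves $g'(x)=2x\,g(x)-1$ with $g(0)=\tfrac{\sqrt\pi}{2}$ and $x g(x)\to\tfrac12$, against the candidates $h_c(x)=\frac{1}{x+\sqrt{x^2+c}}=\frac{\sqrt{x^2+c}-x}{c}$, which satisfy $2x\,h_c-1=-c\,h_c^2$, via monotonicity of the ratio $g/h_c$ and the endpoint values, or via the continued-fraction expansion of the Mills ratio). A route that needs the handbook bound only at one point: set $\psi(z)=\frac{1}{2\sqrt\pi}e^{-z^2/2}-\bar\Phi(z)$ and $\chi(z)=\bar\Phi(z)-\frac{1}{4\sqrt\pi}e^{-z^2/2}$; one computes $\psi'(z)=\frac{e^{-z^2/2}}{2\sqrt\pi}(\sqrt2-z)$ (so $\psi$ increases then decreases to $0$ at $\infty$) and $\chi'(z)=\frac{e^{-z^2/2}}{4\sqrt\pi}(z-2\sqrt2)<0$ on $[0,1]$, so it suffices to verify $\psi(1)\ge0$ and $\chi(1)\ge0$, i.e. the single-point bounds $\frac{e^{-1/2}}{4\sqrt\pi}\le\bar\Phi(1)\le\frac{e^{-1/2}}{2\sqrt\pi}$. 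There is essentially no real obstacle here beyond establishing those error-function bounds; the one thing to get right is that the deliberately simple constants $\frac{1}{\sqrt\pi}$ and $\frac{1}{2\sqrt\pi}$ are calibrated exactly so that the relevant denominators straddle $2$ and $4$ at the threshold $z=1$ — a different choice of constants would move the threshold.
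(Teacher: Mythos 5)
Your proof is correct. The paper gives no proof of this lemma at all --- it simply cites Formula 7.1.13 of Abramowitz--Stegun --- and your argument is exactly the verification that the stated constants $\frac{1}{\sqrt{\pi}}$ and $\frac{1}{2\sqrt{\pi}}$ follow from that sandwich bound (the numerics at the threshold, $\frac{1}{\sqrt 2}+\sqrt{\frac12+\frac4\pi}>2$ and $\frac{1}{\sqrt 2}+\sqrt{\frac52}<4$, both check out), so it supplies precisely the detail the paper leaves implicit.
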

Now we prove Lemma \ref{ts_concentrate}.
\begin{proof}
Since $\theta_j^{\text{TS}} | \mathcal{F}_{j\tau}\sim \mathcal{N} \left (\bar\theta_j, \left(2g_1 (j)^2 \frac{c_3}{\alpha j} + \frac{2g_2(j)^2}{j} \right) I_d  \right)$, and $\theta^{\text{TS}}_j$ is independent of 
$\left\{\cup_{t=j\tau+1}^{(j+1)\tau}\mathcal{A}_{t}\right\} = \{x_{t,a}, a\in [K], j\tau < t \leq (j+1)\tau \}$, we have for $x \in \left\{\cup_{t=j\tau+1}^{(j+1)\tau}\mathcal{A}_{t}\right\}$,
\begin{equation*}
    x^T (\bar\theta_j - \theta_j^{\text{TS}}) |  \mathcal{F}_{j\tau}, x \sim \mathcal{N} \left(0, \left( 2g_1 (j)^2 \frac{c_3}{\alpha j} + \frac{2g_2(j)^2}{j} \right) \|x\|^2 \right).
\end{equation*}
From the property of Gaussian random variable in Lemma \ref{normal_ineq}, when $u = \sqrt{2\log (T^2K\tau)}$, we have
\begin{equation}\label{cp}
    \mathbbm{P} \left(|x^T (\bar\theta_j - \theta_j^{\text{TS}}) | \geq u  \sqrt{ 2g_1 (j)^2 \frac{c_3}{\alpha j}\|x\|^2 + \frac{2g_2(j)^2}{j} \|x\|^2}  \middle | \mathcal{F}_{j\tau}, x\right ) \leq \frac{1}{\sqrt{\pi} } e^{-\frac{u^2}{2}} \leq \frac{1}{K\tau T^2}.
\end{equation}
We use the following property of conditional probability
\begin{equation}\label{cp_prop}
    \int_x \mathbbm{P} (E | X=x, \mathcal{F} ) f(X=x|\mathcal{F}) dx = \mathbbm{P} (E | \mathcal{F} ),
\end{equation}
where $f(X=x|\mathcal{F})$ is the conditional \textit{p.d.f} of a random variable $X$ and $E$ is an event.
Combine Equation \ref{cp} and Equation \ref{cp_prop}, we have for every $a\in [K]$ and $j\tau < t \leq (j+1)\tau$,
\begin{align*}
    & \mathbbm{P} \left(|x_{t,a}^T (\bar\theta_j - \theta_j^{\text{TS}}) | \geq u  \sqrt{ 2g_1 (j)^2 \frac{c_3}{\alpha j} + 2g_2(j)^2 / j \|x_{t,a}\|^2}  \middle | \mathcal{F}_{j\tau} \right ) \\
    & = \int_x \mathbbm{P} \left(|x_{t,a}^T (\bar\theta_j - \theta_j^{\text{TS}}) | \geq u  \sqrt{ 2g_1 (j)^2 \frac{c_3}{\alpha j} + 2g_2(j)^2 / j \|x_{t,a}\|^2}  \middle | \mathcal{F}_{j\tau} , x_{t,a} = x\right ) f( x_{t,a}=x|\mathcal{F}_{j\tau} ) dx\\
    & \leq \frac{1}{K\tau T^2} \int_x  f( x_{t,a}=x|\mathcal{F}_{j\tau} ) dx = \frac{1}{K\tau T^2}
\end{align*}
Applying a union bound, we get the conclusion. 
\end{proof}

\subsection{Proof of Lemma \ref{anti_concentrate}}
\begin{proof}
We still use Lemma \ref{normal_ineq} to show the result. For convenience, denote $x := x_{t,*}$, $\gamma_1 := \sqrt{\frac{c_3}{\alpha {j_t}}}\|x\|$ and $\gamma_2 := \frac{\|x\|}{\sqrt{{j_t}}}$. 
Note that $x$ is independent of $\theta_{j_t}^{\text{TS}}$, so
\begin{equation}
    x^T (\bar\theta_{j_t} - \theta_{j_t}^{\text{TS}}) |  \mathcal{F}_{j_t\tau}, x \sim \mathcal{N} \left(0, \left( 2g_1 ({j_t})^2 \gamma_1^2 + 2g_2({j_t})^2 \gamma_2^2 \right)  \right).
\end{equation}
Therefore, 
\begin{align*}
    & \mathbbm{P}\left( x^T \theta^{\text{TS}}_{j_t}  > x^T \theta^* \middle | \mathcal{F}_{j_t\tau}, x \right) 
    = \mathbbm{P}\left( \frac{ x^T \theta^{\text{TS}}_{j_t} - x^T \bar\theta_{j_t}}{ \sqrt{ 2g_1 ({j_t})^2 \gamma_1^2 + 2g_2({j_t})^2 \gamma_2^2}  }   > \frac{ x^T \theta^* - x^T \bar\theta_{j_t}}{\sqrt{ 2g_1 ({j_t})^2 \gamma_1^2 + 2g_2({j_t})^2 \gamma_2^2} }  \middle | \mathcal{F}_{{j_t}\tau}, x \right) \\
    & \geq  \mathbbm{P}\left( \frac{ x^T \theta^{\text{TS}}_{j_t} - x^T \bar\theta_{j_t}}{\sqrt{ 2g_1 ({j_t})^2 \gamma_1^2 + 2g_2({j_t})^2 \gamma_2^2}  }   > \frac{ g_1 ({j_t}) \|x\|_{V_{{j_t}\tau+1}^{-1}} + g_2({j_t}) \frac{\|x\|}{\sqrt{j_t}}}{\sqrt{ 2g_1 ({j_t})^2 \gamma_1^2 + 2g_2({j_t})^2 \gamma_2^2} }  \middle | \mathcal{F}_{{j_t}\tau}, x \right)  \\ 
    & \geq \mathbbm{P}\left( \frac{ x^T \theta^{\text{TS}}_{j_t} - x^T \bar\theta_{j_t}}{\sqrt{ 2g_1 ({j_t})^2 \gamma_1^2 + 2g_2({j_t})^2 \gamma_2^2} }   > \frac{ g_1 ({j_t}) \sqrt{\frac{c_3}{\alpha {j_t}}}\|x\| + g_2(j) \frac{\|x\|}{\sqrt{{j_t}}}}{\sqrt{ 2g_1 ({j_t})^2 \gamma_1^2 + 2g_2({j_t})^2 \gamma_2^2} }  \middle | \mathcal{F}_{{j_t}\tau} , x\right)  \\ 
    & \geq \frac{1}{4\sqrt{\pi} } e^{-\frac{z^2}{2}} , 
\end{align*}
where $z : = \frac{ g_1 ({j_t}) \gamma_1 + g_2({j_t}) \gamma_2}{\sqrt{ 2g_1 ({j_t})^2 \gamma_1^2 + 2g_2({j_t})^2 \gamma_2^2}}$.
The first and second inequalities hold since $\mathcal{F}_t$ is a filtration such that $E_1({j_t})$ and
$\lambda_{\min} (V_{{j_t}\tau+1}) \geq \frac{\alpha {j_t}}{c_3}$ are true.
Notice that we have $0<z\leq 1$ since
\begin{equation*}
2g_1 ({j_t})^2 \gamma_1^2 + 2g_2({j_t})^2 \gamma_2^2 - (g_1 ({j_t}) \gamma_1 + g_2({j_t}) \gamma_2)^2 = (g_1 ({j_t}) \gamma_1 - g_2({j_t}) \gamma_2)^2 \geq 0.
\end{equation*}
Therefore, we get
\begin{equation*}
    \mathbbm{P}\left( x^T \theta^{\text{TS}}_{j_t}  > x^T \theta^* \middle | \mathcal{F}_{{j_t}\tau}, x \right) \geq \frac{1}{4\sqrt{\pi} } e^{-\frac{z^2}{2}} \geq \frac{1}{4\sqrt{\pi e}}. 
\end{equation*}
Similarly, using Equation \ref{cp_prop}, we get 
\begin{align*}
    \mathbbm{P}\left( x_{t,*}^T \theta^{\text{TS}}_{j_t}  > x_{t,*}^T \theta^* \middle | \mathcal{F}_{j_t\tau} \right) = \int_x \mathbbm{P}\left( x_{t,*}^T \theta^{\text{TS}}_{j_t}  > x_{t,*}^T \theta^* \middle | \mathcal{F}_{j_t\tau}, x_{t,*} = x \right) f(x_{t,*} = x |\mathcal{F}_{j_t\tau}) dx 
    \geq \frac{1}{4\sqrt{\pi e}}. 
\end{align*}
\end{proof}

\subsection{Proof of Lemma \ref{reg_t}}
The technique used in this proof is extracted from \cite{agrawal2013thompson,kveton2019perturbed}.
\begin{proof}
Denote $\mathbbm{E}_t [\cdot] := \mathbbm{E} [\cdot | \mathcal{F}_{t}]$. To prove the lemma, we prove the following Equation \ref{key} holds for any possible filtration $\mathcal{F}_t$:
\begin{equation}\label{key}
    \mathbbm{E}_{j_t\tau} [
    \Delta_{a_t} (t) \mathbbm{1}(E_1(j_t) \cap E_2(j_t) \cap E_3(j_t)) ] \leq \left(1 + \frac{2}{\frac{1}{4\sqrt{\pi e}} - \frac{1}{T^2}} \right) \mathbbm{E}_{j_t\tau} \left[  H_{a_t} (t) \mathbbm{1}(E_3(j_t) )  \right ] 
\end{equation}
Denote the following set as the underesampled arms at round $t$, 
\begin{align*}
    S_t^C  & = \left\{ i\in [K]: H_i (t)  \geq \Delta_i(t) \right\}
\end{align*}
Note that $a_t^* \in S_t^C$ for all $t$. The set of sufficiently sampled arms is $S_t = [K] \setminus S_t^C$.
Let $J_t = \argmin_{i \in S_t^C} H_i (t)$ be the least uncertain undersampled arm at round $t$. At round $t$, denote $j_t = \lfloor \frac{t-1}{\tau} \rfloor$. In the steps below, we assume that event $E_1(j_t) \cap E_2 (j_t)$ occurs, then
\begin{align*} 
    \Delta_{a_t} (t) & = \Delta_{J_t} (t) + (x_{t,J_t} - X_t)^T \theta^* \\
    & = \Delta_{J_t} (t) + x_{t,J_t}^T ( \theta^* - \theta_{j_t}^{TS} ) + (x_{t,J_t} - X_t)^T \theta_{j_t}^{TS} + X_t^T (\theta_{j_t}^{TS} - \theta^*) \\
    & \leq \Delta_{J_t} (t) + H_{J_t} (t) + H_{a_t} (t) \quad \text{ since $(x_{t,J_t} - X_t)^T \theta_{j_t}^{TS} \leq 0$} \\
    & \leq 2 H_{J_t} (t) + H_{a_t} (t) \quad \text{ since $J_t \in S_t^C$}.
\end{align*}

The left to do is to bound $H_{J_t} (t)$ by $H_{a_t} (t)$. 
Since $J_t = \argmin_{i \in S_t^C} H_i (t)$, we have
\begin{equation}\label{bound_jt}
    \mathbbm{E}_{j_t\tau} \left[ H_{a_t} (t)  \right ] \geq  \mathbbm{E}_{j_t\tau} \left[ H_{a_t} (t)  | a_t \in S_t^C \right ] \mathbbm{P}\left( a_t \in S_t^C | \mathcal{F}_{j_t\tau} \right) \geq \mathbbm{E}_{j_t\tau} [H_{J_t} (t)]  \mathbbm{P}\left( a_t \in S_t^C | \mathcal{F}_{j_t\tau} \right).
\end{equation}
Therefore, we have
\begin{equation}\label{fit}
    \mathbbm{E}_{j_t\tau} \left[\Delta_{a_t} (t) \mathbbm{1}(E_1(j_t) \cap E_2(j_t)) \right] \leq \left(1+\frac{2}{P\left( a_t \in S_t^C | \mathcal{F}_{j_t\tau}\right)} \right) \mathbbm{E}_{j_t\tau} \left[ H_{a_t} (t)  \right ]
\end{equation}
Next, we bound $P\left( a_t \in S_t^C | \mathcal{F}_{j_t\tau} \right)$. 
\begin{align}
    & \mathbbm{P}\left( a_t \in S_t^C | \mathcal{F}_{j_t\tau} \right) \geq \mathbbm{P}\left( x_{t,*}^T \theta^{\text{TS}}_{j_t} \geq \max_{i\in S_t} x_{t,i}^T \theta^{\text{TS}}_{j_t}  \middle | \mathcal{F}_{j_t\tau} \right) 
    \quad \text{ since $a_t^* \in S_t^C$}  \nonumber \\
    & \geq \mathbbm{P}\left( x_{t,*}^T \theta^{\text{TS}}_{j_t} \geq \max_{i\in S_t} x_{t,i}^T \theta^{\text{TS}}_{j_t}, E_1(j_t)\cap E_2(j_t)  \middle | \mathcal{F}_{j_t\tau} \right) \nonumber \\
    & \geq \mathbbm{P}\left( x_{t,*}^T \theta^{\text{TS}}_{j_t} \geq  x_{t,*}^T \theta^*, E_1(j_t)\cap E_2(j_t)  | \mathcal{F}_{j_t\tau} \right) \label{third} \\
    & \geq \mathbbm{P}\left( x_{t,*}^T \theta^{\text{TS}}_{j_t} \geq  x_{t,*}^T \theta^*, E_1(j_t) | \mathcal{F}_{j_t\tau} \right) - \mathbbm{P}\left( E_2^C(j_t) | \mathcal{F}_{j_t\tau} \right) \nonumber \\
    & \geq \mathbbm{P}\left( x_{t,*}^T \theta^{\text{TS}}_{j_t} \geq  x_{t,*}^T \theta^*, E_1(j_t) | \mathcal{F}_{j_t\tau} \right) - \frac{1}{T^2} \label{last}.   
\end{align}
Inequality \ref{third} holds because for all $i\in S_t$, on event $E_1(j_t)\cap E_2(j_t)$, 
\begin{equation*}
    x_{t,i}^T\theta^{\text{TS}}_{j_t} \leq x_{t,i}^T \theta^* + H_i (t) < x_{t,i}^T \theta^* + \Delta_i (t) = x_{t,*}^T \theta^*.
\end{equation*}
Inequality \ref{last} holds because of Lemma \ref{ts_concentrate}.
When $\mathcal{F}_t$ is a filtration such that $E_1(j_t)$ and $E_3(j_t)$ are true, we have from Lemma \ref{anti_concentrate} that 
\begin{equation*}
    \mathbbm{P}\left( a_t \in S_t^C | \mathcal{F}_{j_t\tau} \right) \geq \frac{1}{4\sqrt{\pi e}} - \frac{1}{T^2}.
\end{equation*}
So under such filtration, from Equation \ref{fit}, we have 
\begin{equation*}
    \mathbbm{E}_{j_t\tau} \left[
    \Delta_{a_t} (t) \mathbbm{1}(E_1(j_t) \cap E_2(j_t)  ) 
    \right] 
    \leq \left(1+\frac{2}{\frac{1}{4\sqrt{\pi e}} - \frac{1}{T^2}} \right) \mathbbm{E}_{j_t\tau} \left[ H_{a_t} (t)  \right ].
\end{equation*}
Since $E_3(j_t)$ is $\mathcal{F}_{j_t\tau}$-measurable, we have under such filtration,
\begin{equation*}
    \mathbbm{E}_{j_t\tau} [
    \Delta_{a_t} (t) \mathbbm{1}(E_1(j_t) \cap E_2(j_t) \cap E_3(j_t) ) ]
    \leq \left(1 + \frac{2}{\frac{1}{4\sqrt{\pi e}} - \frac{1}{T^2}} \right) \mathbbm{E}_{j_t\tau} \left[  H_{a_t} (t) \mathbbm{1} (E_3(j_t))\right ].
\end{equation*}
When $\mathcal{F}_t$ is a filtration such that $E_1(j_t)\cap E_3(j_t)$ is not true, the conclusion holds trivially. This finishes our proof. 
\end{proof}

\subsection{Proof of Theorem \ref{main_thm}}
Before proving the theorem, we show a lemma below.

\begin{lem}\label{bound_ht}
Let $J = \lfloor \frac{T}{\tau} \rfloor$, then
 \begin{align*}
     \mathbbm{E}\left[\sum_{t=\tau+1}^T H_{a_t} (t) \mathbbm{1}(E_3(j_t)) \right]
    & \leq \sqrt{\tau T} \left(2 g_1(J) \sqrt{\frac{c_3 }{\alpha}} + 2 g_2(J) + u \sqrt{ 2g_1(J)^2 \frac{c_3 }{\alpha}  + 2g_2(J)^2  } \sqrt{1+\log J} \right).
 \end{align*}
\end{lem}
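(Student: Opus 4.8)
The plan is to bound the sum summand-by-summand, using the three non-negative pieces in the definition of $H_i(t)$ from Equation \ref{ht}. Since every bound I derive is deterministic (it depends on the sample path only through the event $E_3(j_t)$, which I either exploit or simply drop), the expectation at the end is immediate. Concretely, write $H_{a_t}(t)\mathbbm{1}(E_3(j_t))$ as the sum of $g_1(j_t)\|x_{t,a_t}\|_{V_{j_t\tau+1}^{-1}}\mathbbm{1}(E_3(j_t))$, $g_2(j_t)\tfrac{\|x_{t,a_t}\|}{\sqrt{j_t}}\mathbbm{1}(E_3(j_t))$, and $u\sqrt{\tfrac{2c_3 g_1(j_t)^2}{\alpha j_t}\|x_{t,a_t}\|^2 + 2g_2(j_t)^2\tfrac{\|x_{t,a_t}\|^2}{j_t}}\,\mathbbm{1}(E_3(j_t))$, and bound $\sum_{t=\tau+1}^T$ of each.

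For the $g_1$ piece I would use the event: on $E_3(j_t)=\{\lambda_{\min}(V_{j_t\tau+1})\ge \alpha j_t/c_3\}$ one has $\|x_{t,a_t}\|_{V_{j_t\tau+1}^{-1}}^2 \le \|x_{t,a_t}\|^2/\lambda_{\min}(V_{j_t\tau+1}) \le c_3\|x_{t,a_t}\|^2/(\alpha j_t)\le c_3/(\alpha j_t)$, using $\|x_{t,a_t}\|\le 1$. Both $g_1$ and $g_2$ are nondecreasing in $j$ (as $g_1$ involves $\log(1+2j\tau/d)$ and $g_2$ involves $\log j$), so $g_1(j_t)\le g_1(J)$ and $g_2(j_t)\le g_2(J)$ with $J=\lfloor T/\tau\rfloor$; for the second piece I also just bound the indicator by $1$. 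The only combinatorial input is that as $t$ ranges over $\{\tau+1,\dots,T\}$ each value of $j_t=\lfloor (t-1)/\tau\rfloor$ in $\{1,\dots,J\}$ is hit at most $\tau$ times, hence $\sum_{t=\tau+1}^T j_t^{-1/2}\le \tau\sum_{j=1}^J j^{-1/2}\le 2\tau\sqrt J\le 2\sqrt{\tau T}$, where the last step uses $J\le T/\tau$. This gives the $2g_1(J)\sqrt{c_3/\alpha}\sqrt{\tau T}$ and $2g_2(J)\sqrt{\tau T}$ contributions.

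The third piece I would handle by Cauchy--Schwarz rather than term-wise: writing it as $u\,\tfrac{\|x_{t,a_t}\|}{\sqrt{j_t}}\sqrt{\tfrac{2c_3 g_1(j_t)^2}{\alpha}+2g_2(j_t)^2}$ and again using $g_1(j_t)\le g_1(J)$, $g_2(j_t)\le g_2(J)$, $\|x_{t,a_t}\|\le1$, one gets $\sum_{t=\tau+1}^T(\cdots)\le u\sqrt{T}\,\big(\tfrac{2c_3 g_1(J)^2}{\alpha}+2g_2(J)^2\big)^{1/2}\big(\sum_{t=\tau+1}^T j_t^{-1}\big)^{1/2}$, and the same multiplicity-$\tau$ counting yields $\sum_{t=\tau+1}^T j_t^{-1}\le \tau\sum_{j=1}^J j^{-1}\le \tau(1+\log J)$, hence the final term $u\sqrt{\tau T}\,\big(\tfrac{2c_3 g_1(J)^2}{\alpha}+2g_2(J)^2\big)^{1/2}\sqrt{1+\log J}$. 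Adding the three bounds and taking expectations completes the proof. I do not expect a genuine obstacle here; the only mildly delicate points are the monotonicity of $g_1,g_2$ in $j$, the $E_3$-controlled bound on the weighted norm, the $\tau$-fold multiplicity count, and the elementary estimates $\sum_{j\le J}j^{-1/2}\le2\sqrt J$, $\sum_{j\le J}j^{-1}\le1+\log J$, and $\tau\sqrt J\le\sqrt{\tau T}$.
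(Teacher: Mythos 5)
Your proposal is correct and follows essentially the same route as the paper's proof: the same three-way decomposition of $H_{a_t}(t)$, the same use of $E_3(j_t)$ to bound $\|x_{t,a_t}\|^2_{V_{j_t\tau+1}^{-1}}$ by $c_3/(\alpha j_t)$, the same monotone bounds $g_1(j_t)\le g_1(J)$, $g_2(j_t)\le g_2(J)$ with the $\tau$-fold multiplicity count, and Cauchy--Schwarz on the third term yielding the $\sqrt{1+\log J}$ factor. No gaps.
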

\begin{proof}
We know $H_{a_t}(t) = H_{a_t,1} (t) + H_{a_t,2} (t) + H_{a_t,3} (t)$ from definition, where
\begin{align*}
    H_{i,1} (t) &= g_1 (j_t) \|x_{t,i}\|_{V_{j_t\tau+1}^{-1}}, \quad
    H_{i,2} (t) = g_2 (j_t) \frac{\|x_{t,i}\|}{\sqrt{j_t}}, \\
    H_{i,3} (t) & = u \sqrt{ 2g_1 (j_t)^2 \frac{c_3}{\alpha j_t}\|x_{t,i}\|^2 + 2g_2(j_t)^2 \frac{\|x_{t,i}\|^2}{j_t} }
\end{align*}
For all $t$, we have $j_t \leq \lfloor\frac{T}{\tau} \rfloor$ and so
$g_1(j_t) \leq g_1(J)$, and 
$g_2 (j_t) \leq g_2 (J) $. 
Since $\|X_t\|^2_{V_{j\tau+1}^{-1}} \leq \lambda_{\max} (V_{j\tau+1}^{-1}) \|X_t\|^2 \leq \frac{c_3}{\alpha j}$ when $E_3(j)$ holds, we have
\begin{equation}\label{h1}
    \mathbbm{E}\left[\sum_{t=\tau+1}^T H_{a_t,1} (t)  \mathbbm{1}(E_3(j_t)) \right]
    \leq 2\tau g_1(J) \sqrt{\frac{c_3}{\alpha}J}  \leq 2 g_1(J) \sqrt{\frac{c_3 \tau}{\alpha}}\sqrt{T}. 
\end{equation}
We also have
\begin{equation}\label{h2}
    \sum_{t=\tau+1}^T H_{a_t,2} (t) \leq  g_2(J) \sum_{t=\tau+1}^T \frac{\|X_t\|}{\sqrt{j_t}} 
    \leq 2 g_2(J)\sqrt{\tau T}.
\end{equation}
From Cauchy-Schwarz, we have 
\begin{align}\label{h3}
    \sum_{t=\tau+1}^T H_{a_t,3} (t) & \leq  u \sqrt{T} \sqrt{ \sum_{t=\tau+1}^T 2g_1 (j_t)^2 \frac{c_3}{\alpha j_t}\|X_t\|^2 + 2g_2(j_t)^2 \frac{\|X_t\|^2}{j_t} } \nonumber \\
    & \leq u \sqrt{T} \sqrt{ 2g_1(J)^2 \frac{c_3 \tau}{\alpha} (1+\log J) + 2g_2(J)^2 \tau (1+\log J) }.
\end{align}
Combine Equation \ref{h1}, \ref{h2}, \ref{h3}, we get the conclusion.
\end{proof}

Now we formally prove Theorem \ref{main_thm}.
\begin{proof}
Since
\begin{align*}
    & \mathbbm{E}_{j_t\tau} \left[\mu(x_{t,*}^T \theta^*) - \mu(X_t^T \theta^*) \right]  
    \leq \mathbbm{E}_{j_t\tau} \left[\left(\mu(x_{t,*}^T \theta^*) - \mu(X_t^T \theta^*)\right) \mathbbm{1}(E_2(j_t))\right]  + \mathbbm{P}(E_2^C(j_t) | \mathcal{F}_{j_t\tau}) \\
    & \leq \mathbbm{E}_{j_t\tau} \left[\left(\mu(x_{t,*}^T \theta^*) - \mu(X_t^T \theta^*) \right)\mathbbm{1}(E_2(j_t))\right]  + \frac{1}{T^2},
\end{align*}
we have
\begin{align*}
     \mathbbm{E} \left[\mu(x_{t,*}^T \theta^*) - \mu(X_t^T \theta^*) \right]  
    \leq \mathbbm{E} \left[\left(\mu(x_{t,*}^T \theta^*) - \mu(X_t^T \theta^*) \right)\mathbbm{1}(E_2(j_t))\right]  + \frac{1}{T^2}
\end{align*}
From Proposition \ref{prop}, when $\tau$ is chosen as in Equation \ref{tau}, $E_3(j_t)$ holds with probability with at least $1-\frac{1}{T^2}$ for every $t$.
From the above,
\begin{align*}
    & \mathbbm{E}[R(T)]  =  \sum_{t=1}^T \mathbbm{E} \left[\mu(x_{t,*}^T \theta^*) - \mu(X_t^T \theta^*) \right]  
    \leq \sum_{t=1}^T \mathbbm{E} \left[\left(\mu(x_{t,*}^T \theta^*) - \mu(X_t^T \theta^*) \right)\mathbbm{1}(E_2(j_t))\right]  + \frac{1}{T}\\
    & \leq \mathbbm{E} \left[\sum_{t=1}^T \left(\mu(x_{t,*}^T \theta^*) - \mu(X_t^T \theta^*) \right)  \mathbbm{1} (E_1(j_t) \cap E_2(j_t) \cap E_3(j_t)) \right] + \sum_{t=1}^T \mathbbm{P} (E_1^C(j_t) \cup E_3^C(j_t)) + \frac{1}{T}\\
    & \leq \tau + L_{\mu} \sum_{t=\tau+1}^T \mathbbm{E}[\Delta_{a_t} (t) \mathbbm{1}(E_1(j_t) \cap E_2(j_t) \cap E_3(j_t))] + \frac{7}{T} \\
    & \leq \tau + pL_{\mu} \sum_{t=\tau+1}^T \mathbbm{E} \left[  H_{a_t} (t)  \mathbbm{1}(E_3(j_t))  \right ]  + \frac{7}{T} \quad \text{ from Lemma \ref{reg_t}.}
\end{align*}
From Lemma \ref{bound_ht}, we have
\begin{equation*}
    \mathbbm{E}[R(T)] \leq \tau + L_{\mu} p \sqrt{\tau T} \left[2\sqrt{\frac{c_3}{\alpha}} g_1(J) + 2 g_2(J) + u \sqrt{ \frac{2c_3 g_1(J)^2}{\alpha} + 2g_2(J)^2} \sqrt{1+\log \lfloor \frac{T}{\tau} \rfloor}\right] + \frac{7}{T}.
\end{equation*}
This ends our proof.
\end{proof}

\subsection{Discussion}\label{d}
As pointed out by the reader, since $\|x_{t,a}\| \leq 1$, so $\sigma_0^2, \lambda_f \leq O(\frac{1}{d})$. So a more realistic assumption should be $\sigma_0^2,\lambda_f \sim O(\frac{1}{d})$. However, we found that $\sigma_0^2 \sim O(1)$ is an assumption that is widely used in literature (see \cite{li2017provably}). If we assume $\sigma_0^2, \lambda_f \sim O(1/d)$, then the regret upper bound of our algorithm is $\mathbbm{E}[R(T)] \leq \tilde O(d^{\frac{5}{2}} \sqrt{T})$ and the regret upper bound of UCB-GLM \citep{li2017provably} is $\tilde O(d^3 + d \sqrt{T})$.

\end{document}